\documentclass{article}

\usepackage[preprint]{icml2026}
\usepackage{microtype}
\usepackage{graphicx}
\usepackage{subcaption}
\usepackage{booktabs}
\usepackage{amsmath,amssymb,amsthm}
\usepackage{mathtools}
\usepackage{algorithm}
\usepackage{algorithmic}
\usepackage{enumitem}
\usepackage{hyperref}
\usepackage{natbib}

\newcommand{\E}{\mathbb{E}}
\newcommand{\Pcal}{\mathcal{P}}
\newcommand{\Xcal}{\mathcal{X}}
\newcommand{\Ycal}{\mathcal{Y}}
\newcommand{\Zcal}{\mathcal{Z}}
\newcommand{\Hcal}{\mathcal{H}}

\newcommand{\TV}{\mathrm{TV}}

\newcommand{\MB}{\mathrm{MB}}
\newcommand{\Pa}{\mathrm{Pa}}

\newcommand{\argmin}{\operatorname*{arg\,min}}

\theoremstyle{definition}
\newtheorem{definition}{Definition}[section]
\newtheorem{assumption}{Assumption}[section]
\newtheorem{remark}{Remark}[section]

\theoremstyle{plain}
\newtheorem{lemma}{Lemma}[section]
\newtheorem{proposition}{Proposition}[section]
\newtheorem{theorem}{Theorem}[section]

\AtBeginDocument{%
  \setlength{\abovedisplayskip}{5pt plus 2pt minus 2pt}      % Default ~10pt
  \setlength{\belowdisplayskip}{5pt plus 2pt minus 2pt}      % Default ~10pt
  \setlength{\abovedisplayshortskip}{3pt plus 1pt minus 1pt} % Default ~0pt+3pt
  \setlength{\belowdisplayshortskip}{4pt plus 2pt minus 1pt} % Default ~6pt+3pt
}

\makeatletter
\renewcommand{\section}{\@startsection{section}{1}{\z@}{-0.10in}{0.01in}%
   {\normalfont\large\bf\raggedright}}
\renewcommand{\subsection}{\@startsection{subsection}{2}{\z@}{-0.08in}{0.005in}%
   {\normalfont\normalsize\bf\raggedright}}
\renewcommand{\paragraph}{\@startsection{paragraph}{4}{\z@}{0.8ex plus 0.2ex minus 0.2ex}{-1em}%
   {\normalfont\normalsize\bf}}
\makeatother

\icmltitlerunning{PRISM: Structure-Aware DP Synthetic Data for Prediction}

\begin{document}

\twocolumn[
\icmltitle{PRISM: Differentially Private Synthetic Data with Structure-Aware Budget Allocation for Prediction}

\begin{icmlauthorlist}
\icmlauthor{Amir Asiaee}{vumc-biostat}
\icmlauthor{Chao Yan}{vumc-bmi}
\icmlauthor{Zachary B.\ Abrams}{wustl}
\icmlauthor{Bradley A.\ Malin}{vumc-bmi}
\end{icmlauthorlist}

\icmlaffiliation{vumc-biostat}{Department of Biostatistics, Vanderbilt University Medical Center, 2525 West End Avenue, Nashville, TN 37203, USA}
\icmlaffiliation{vumc-bmi}{Department of Biomedical Informatics, Vanderbilt University Medical Center, Nashville, TN, USA}
\icmlaffiliation{wustl}{Institute for Informatics, Washington University, 4444 Forest Park Avenue, St.\ Louis, MO 63108, USA}

\icmlcorrespondingauthor{Amir Asiaee}{amir.asiaeetaheri@vumc.org}

\icmlkeywords{Differential Privacy, Synthetic Data, Predictive Modeling, Causal Inference, Graphical Models, Workload Design}

\vskip 0.3in
]
\printAffiliationsAndNotice{}

\begin{abstract}
Differential privacy (DP) provides a mathematical guarantee limiting what an adversary can learn about any individual from released data. However, achieving this protection typically requires adding noise, and noise can accumulate when many statistics are measured. Existing DP synthetic data methods treat all features symmetrically, spreading noise uniformly even when the data will serve a specific prediction task. This uniform treatment is suboptimal: if a hospital releases synthetic patient records for readmission prediction, perturbing all features equally---rather than concentrating fidelity on those most relevant to readmission---fails to optimize utility for the prediction task.

To address this, we develop a prediction-centric approach operating in three regimes depending on available structural knowledge. In the \emph{causal regime}, when the causal parents of $Y$ (the outcome variable to predict) are known and distribution shift is expected, we target the parents for robustness. In the \emph{graphical regime}, when a Bayesian network structure is available and the distribution is stable, the Markov blanket of $Y$ provides a sufficient feature set for optimal prediction. In the \emph{predictive regime}, when no structural knowledge exists, we select features via differentially private methods without claiming to recover causal or graphical structure.

We formalize this as PRISM (Prediction-centric Release with Informed Structure Measurements), a mechanism that (i) identifies a predictive feature subset according to the appropriate regime, (ii) constructs targeted summary statistics involving those features and $Y$, (iii) allocates budget to minimize an upper bound on prediction error, and (iv) synthesizes data via graphical-model inference. We prove end-to-end privacy guarantees and risk bounds connecting measurement noise to predictive performance. Empirically, task-aware allocation improves prediction accuracy compared to generic synthesizers, with larger gains under tighter privacy budgets. Under distribution shift, targeting causal parents achieves AUC $\approx 0.73$ while correlation-based selection collapses to chance ($\approx 0.49$).
\end{abstract}

% ============================================================
\section{Introduction}
\label{sec:intro}

Consider a hospital network that wants to share patient records with external researchers developing readmission risk models. Privacy regulations and institutional policies often make it challenging to release raw data directly, so the hospital generates synthetic records designed to preserve statistical properties of the original while protecting individuals. The researchers have one goal, to train classifiers predicting 30-day readmission. Yet standard synthetic data generators treat all 200 columns identically, spending equal effort to preserve correlations among demographic fields, billing codes, and clinical indicators. For readmission prediction, perhaps only 15 features matter. Preserving their relationship to the outcome would suffice; accurately reproducing correlations among irrelevant features is wasted effort.

\paragraph{Why budget matters.}
Differential privacy (DP) formalizes privacy protection by requiring that any computation on the data be insensitive to whether any single individual is included \citep{dwork2006calibrating,dwork2014algorithmic}. In practice, this means adding carefully calibrated noise. The total amount of noise a mechanism can add while maintaining a given privacy level is often called the ``privacy budget.'' Crucially, when a synthesizer measures many statistics---say, all pairwise feature correlations---the noise accumulates across measurements. This creates a fundamental trade-off: measure more statistics and each becomes noisier, or measure fewer statistics more accurately.

Current DP synthesizers, whether based on low-dimensional marginals \citep{zhang2017privbayes,mckenna2019pgm,mckenna2021nist,zhang2021privsyn} or deep generative models \citep{jordon2019pategan,fang2022dpctgan}, are predominantly task-agnostic. They approximate the full joint distribution or a generic collection of statistics, treating all features as equally important. When the released data serves a known prediction task, this symmetry squanders budget on structure irrelevant to prediction.

\paragraph{A prediction-centric alternative.}
Rather than spreading noise uniformly, we concentrate the privacy budget on statistical relationships that determine predictive performance for a designated target $Y$. Features strongly influencing $Y$ are measured accurately; features with weak or no predictive value are measured coarsely or omitted. The result is synthetic data optimized for a specific downstream task, with formal DP guarantees.

\paragraph{Identifying the right features.}
Which features deserve careful measurement? A natural answer, ``the most predictive ones,'' can be misleading. A feature strongly correlated with $Y$ in training data may be a spurious correlate that fails under distribution shift. The answer depends on what structural knowledge is available and what robustness properties are desired.

We distinguish three regimes. In the \emph{causal regime}, domain knowledge identifies the causal parents $\Pa(Y)$, the direct causes of $Y$ in a structural causal model. Using parents provides robustness under distribution shift because the causal mechanism $P(Y \mid \Pa(Y))$ tends to remain stable even when marginal distributions or spurious correlations change \citep{pearl2009causality}. In the \emph{graphical regime}, a Bayesian network structure is available (from domain knowledge or prior studies), and the \emph{Markov blanket} $\MB(Y)$, comprising parents, children, and co-parents, provides a sufficient set for Bayes-optimal prediction in a fixed distribution \citep{koller2009pgm}. This is a probabilistic sufficiency, not a causal robustness guarantee. In the \emph{predictive regime}, no structural knowledge exists, so we select features via differentially private methods without claiming to recover any particular structure \citep{thakurta2013stability}.

These regimes form a hierarchy of assumptions: the causal regime requires the strongest knowledge but provides robustness under shift; the graphical regime requires structure but not causal semantics; the predictive regime is the most practical but offers no structural guarantees.

\paragraph{An end-to-end perspective.}
Synthetic data is never an end in itself---downstream algorithms consume it for learning. Formalizing this as a pipeline,
\[
D \xrightarrow{\text{DP synthesizer } \mathsf{M}} \widetilde{D} \xrightarrow{\text{learner } \mathsf{A}} \widehat{h},
\]
we can trace how mechanism design choices propagate to the prediction error of $\widehat{h}$. This enables principled optimization: allocate budget to minimize an upper bound on excess risk, rather than a generic fidelity metric.

\paragraph{What distinguishes our approach.}
Our contribution is not the synthesis machinery (we build on Private-PGM \citep{mckenna2019pgm}), but the integration of a three-regime framework; task-derived workload construction; structure-guided feature targeting; risk-motivated budget allocation with a closed-form solution; and a DP feature-selection variant with end-to-end accounting.

\paragraph{Contributions.}
We formalize prediction-centric DP synthesis across the causal, graphical, and predictive regimes; distinguish fixed-distribution prediction sufficiency (Markov blanket) from shift-robust targeting (causal parents); develop PRISM by combining regime-appropriate feature targeting, task-aware workload construction, closed-form budget allocation, and PGM-based synthesis; and validate the regime view empirically on SCM shifts and Adult.

\paragraph{Scope.}
Our goal is neither DP feature selection alone nor direct model release via DP-SGD \citep{abadi2016dpsgd}. We produce a reusable synthetic dataset that downstream analysts can query, visualize, and model freely---but one whose fidelity is concentrated on task-relevant structure. This fills a gap between fully task-agnostic synthesis and single-query model release.

% ============================================================
\section{Related Work}
\label{sec:related}
DP synthetic data for tabular data is dominated by marginal-based methods that measure low-dimensional statistics and fit a distribution consistent with noisy marginals (e.g., PrivBayes \citep{zhang2017privbayes}, Private-PGM \citep{mckenna2019pgm}, MST \citep{mckenna2021nist}, PrivSyn \citep{zhang2021privsyn}) and by DP-trained deep generative models (e.g., PATE-GAN \citep{jordon2019pategan}, DP-CTGAN \citep{fang2022dpctgan}). These approaches are largely task-agnostic: they target generic fidelity rather than utility for a designated prediction target.

Workload-aware query release and synthesis methods (e.g., MWEM \citep{hardt2012mwem}, DualQuery \citep{gaboardi2014dualquery}, AIM \citep{mckenna2022aim}) improve accuracy for specified workloads but take the workload as input. Task-specific private release methods \citep{su2015privpfc,vietri2022rappp} are more targeted but are coupled to particular model families or multi-task settings. PRISM differs by deriving a task workload from a single prediction target $Y$ and allocating privacy budget to minimize a prediction-relevant risk bound, while leveraging standard PGM post-processing as a backend.

% ============================================================
\section{Preliminaries}
\label{sec:prelim}

\subsection{Data model and notation}
Let $Z = (X,Y) \in \Zcal = \Xcal \times \Ycal$, where $X = (X_1,\dots,X_d)$ is a feature vector and $Y$ is a designated prediction target. The private dataset is
\[
D = \{z_i\}_{i=1}^n = \{(x_i,y_i)\}_{i=1}^n \in \Zcal^n,
\]
assumed drawn i.i.d.\ from an unknown population distribution $P \in \Pcal(\Zcal)$ unless otherwise stated.

Our main results are cleanest for finite/discretized domains (typical in marginal-based synthesizers). When $X$ contains continuous columns, we assume public binning/clipping or use mixed-type query families; see Section~\ref{sec:experiments} and remarks.

\subsection{Differential privacy}
A \emph{query} is a function $q: \Zcal^n \to \mathcal{O}$ that computes a statistic from a dataset, such as a histogram, mean, or marginal distribution. A \emph{mechanism} $\mathsf{M}: \Zcal^n \to \mathcal{O}$ is a randomized algorithm that answers queries while protecting individual privacy. Differential privacy formalizes this protection.

We use the standard \emph{replace-one} neighboring relation: datasets $D, D'$ are neighbors (written $D \sim D'$) if they have the same size and differ in exactly one record.

\begin{definition}[Differential Privacy \citep{dwork2006calibrating,dwork2014algorithmic}]
A mechanism $\mathsf{M}:\Zcal^n \to \mathcal{O}$ is $(\varepsilon,\delta)$-differentially private if for all neighboring datasets $D \sim D'$ and all measurable $S \subseteq \mathcal{O}$,
\[
\Pr[\mathsf{M}(D)\in S] \le e^\varepsilon \Pr[\mathsf{M}(D')\in S] + \delta.
\]
\end{definition}

The \emph{sensitivity} of a query measures how much its output can change when one record is replaced:
\[
\Delta_q := \max_{D \sim D'} \|q(D) - q(D')\|.
\]
For $\ell_2$ sensitivity the norm is Euclidean; for $\ell_1$ sensitivity, the sum of absolute differences. DP mechanisms add noise calibrated to sensitivity: lower sensitivity permits less noise for the same privacy guarantee.

We use the Gaussian mechanism for numeric query release: for a query $q$ with $\ell_2$ sensitivity $\Delta_q$, release $q(D) + \eta$ where $\eta \sim \mathcal{N}(0, \sigma^2 I)$ and $\sigma = \Delta_q \sqrt{2\log(1.25/\delta)}/\varepsilon$ \citep{dwork2014algorithmic}. For discrete selection (e.g., feature selection), we use the exponential mechanism: sample $c\in\mathcal{C}$ with probability proportional to $\exp(\varepsilon \cdot u(D,c) / (2\Delta_u))$ \citep{mcsherry2007mechanism}.

\subsection{Causal graphs and Markov blanket}
Let $G$ be a directed acyclic graph (DAG) over variables $(X_1,\dots,X_d,Y)$. Denote the parents of $Y$ by $\Pa(Y)$. The Markov blanket $\MB(Y)$ is the minimal set of nodes that d-separates $Y$ from all other nodes; in a Bayesian network it consists of $Y$'s parents, children, and co-parents of its children \citep{koller2009pgm}. Under standard conditions (causal Markov + faithfulness), $Y \perp\!\!\!\perp X_{\setminus \MB(Y)} \mid X_{\MB(Y)}$.

% ============================================================
\section{Problem Formulation}
\label{sec:problem}

We now formalize prediction-centric DP synthetic data release, starting from a pipeline objective and then developing tractable surrogates.

\subsection{A pipeline objective: DP synthesis for downstream learning}
Let $\mathsf{M}$ be a DP synthesizer mapping private data $D$ to synthetic data $\widetilde{D}$, and let $\mathsf{A}$ be a learning algorithm producing a predictor $\widehat{h}$ from a hypothesis class $\Hcal$.

For a bounded loss $\ell:\mathcal{Y}\times\mathcal{Y}\to[0,1]$ (e.g., 0-1 classification error), define population risk under $P$:
\[
\mathcal{R}_P(h) := \E_{(X,Y)\sim P}\big[\ell(h(X),Y)\big].
\]

\begin{definition}[Prediction-centric DP synthesis objective]
Given target $Y$ and hypothesis class $\Hcal$, design a mechanism $\mathsf{M}$ that is $(\varepsilon,\delta)$-DP and minimizes the expected pipeline risk
\[
\mathcal{U}(\mathsf{M};\mathsf{A},P)
:= \E_{D\sim P^n}\E_{\widetilde{D}\sim \mathsf{M}(D)} \big[\mathcal{R}_P(\mathsf{A}(\widetilde{D}))\big].
\]
\end{definition}

This objective captures what we care about---downstream prediction accuracy---but is not directly tractable: $P$ is unknown and downstream analysts may use various learning algorithms. We therefore develop distributional surrogates that connect to measurable quantities.

\subsection{A distributional surrogate: preserve the right marginal(s)}
For predictors that use only a subset $S\subseteq[d]$ of features, risk depends only on the joint distribution of $(X_S,Y)$. This suggests a tractable target: ensure synthetic distribution $\widetilde{P}$ matches $P$ well on $(X_S,Y)$ while tolerating distortion elsewhere.

\begin{definition}[Task marginal distance]
For subset $S\subseteq[d]$, define
\[
\Delta_S(P,Q) := \TV(P_{S,Y},Q_{S,Y}),
\]
where $\TV(\cdot,\cdot)$ denotes total variation distance.
\end{definition}

\paragraph{Why total variation?}
We use TV because it provides a clean, distribution-free bound on risk differences: for any bounded loss $\ell\in[0,1]$ and any predictor $h$ depending only on $X_S$, the risk difference $|\mathcal{R}_P(h) - \mathcal{R}_Q(h)|$ is at most $\TV(P_{S,Y},Q_{S,Y})$ (Lemma~\ref{lem:tv-risk}). This bound holds without assumptions on the form of $P$ or $h$, making it suitable for settings where downstream analysts choose their own models. Other distances (KL divergence, Wasserstein) would require additional assumptions or yield looser bounds.

\paragraph{Predictor assumption.}
Our risk bounds apply when the downstream predictor depends only on features in $S$: $h(x) = h(x_S)$. This is satisfied when (i) the analyst explicitly restricts to $S$, (ii) regularization or feature selection drives coefficients on $X_{\setminus S}$ to zero, or (iii) the population relationship satisfies $Y \perp X_{\setminus S} \mid X_S$ so that optimal predictors ignore $X_{\setminus S}$. Case (iii) is guaranteed when $S = \MB(Y)$.

\subsection{Choosing $S$: three regimes}
The feature subset $S$ can be specified in three ways, forming a hierarchy of assumptions with different trade-offs:

\begin{assumption}[Markov property]
\label{def:markov}
The joint distribution $P$ over $(X,Y)$ is Markov with respect to DAG $G$: each variable is conditionally independent of its non-descendants given its parents.
\end{assumption}

\paragraph{Regime 1: Causal (parents known, shift expected).}
When a structural causal model identifies the parents $\Pa(Y)$ and distribution shift is anticipated between training and deployment, set $S = \Pa(Y)$. The causal mechanism $P(Y \mid \Pa(Y))$ tends to remain stable under shifts that change marginal distributions or break spurious correlations. This regime provides the strongest robustness guarantees but requires causal knowledge, which typically comes from domain expertise rather than data. No privacy cost is incurred for selecting $S$.

\paragraph{Regime 2: Graphical (structure known, fixed distribution).}
When a Bayesian network structure is available (from domain knowledge, prior studies, or learned from public data) but no shift is expected, set $S = \MB(Y)$. Under the Markov assumption (Definition~\ref{def:markov}), this choice is sufficient for Bayes-optimal prediction (Proposition~\ref{prop:mb-sufficiency}). The blanket may include children and co-parents of $Y$, which are predictive in the training distribution but may be less robust under shift. When the structure must be learned from private data via conditional independence tests, part of the privacy budget is consumed. This regime provides prediction optimality without requiring causal semantics.

\paragraph{Regime 3: Predictive (no structure, agnostic).}
When no graphical or causal structure is available, $S$ must be selected from data using differentially private feature selection (exponential mechanism or stability-based methods \citep{thakurta2013stability}). This consumes part of the privacy budget. The selected $S$ is a predictive subset; we do not claim it recovers the Markov blanket or causal parents. It may include spurious correlates, offering less robustness under shift, but still focuses budget on features empirically associated with $Y$. This is the most practical regime when structural knowledge is unavailable.

\subsection{Design levers: workload and budget allocation}
Given $S$, our mechanism measures a collection of statistics (the workload $\mathcal{W}$) under DP and fits a distribution consistent with noisy answers. Two design choices determine downstream utility:
workload design (which marginals to include, especially those involving $Y$ and $X_S$) and budget allocation (how to divide privacy budget so task-critical marginals receive more budget and less noise).

In the resulting synthetic data, relationships involving $S$ and $Y$ are preserved more accurately, while other relationships may be distorted. This is the desired trade-off: prioritize prediction-relevant structure.

% ============================================================
\section{PRISM: A Structure-Aware DP Synthesis Mechanism}
\label{sec:method}

This section describes PRISM (Prediction-centric Release with Informed Structure Measurements), a concrete mechanism implementing the prediction-centric approach across all three regimes. We specify each step in sufficient detail for reproduction.
Algorithm~\ref{alg:prism} in Supplementary Theory (Appendix~\ref{sec:supp-theory}) provides a compact pseudocode summary.

\subsection{Overview}
\textbf{Input:} Private dataset $D=\{(x_i,y_i)\}_{i=1}^n$ with $d$ features, privacy parameters $(\varepsilon,\delta)$, designated target column $Y$, optional causal graph $G$, and hyperparameters: subset size $k$ (if $G$ unavailable), maximum marginal order $r$ (default 3), and synthetic sample size $n_{\mathrm{syn}}$.

\textbf{Output:} Synthetic dataset $\widetilde{D}$ of size $n_{\mathrm{syn}}$ with identical schema, satisfying $(\varepsilon,\delta)$-DP, with fidelity concentrated on predicting $Y$.

\subsection{Step 1: selecting predictive features $S$}

\paragraph{Regime 1: Causal (parents known).}
When domain expertise identifies the causal parents $\Pa(Y)$ and distribution shift is expected, set $S \leftarrow \Pa(Y)$. This yields the smallest feature set and provides robustness under shift. No privacy cost is incurred.

\paragraph{Regime 2: Graphical (structure known).}
When a DAG $G$ is available but no shift is expected, set $S \leftarrow \MB(Y)$ for prediction-optimal synthesis. The blanket includes parents, children, and co-parents. If the structure must be learned from private data, use DP conditional independence tests, consuming part of the privacy budget. No privacy cost if the structure comes from domain knowledge or public sources.

\paragraph{Regime 3: Predictive (data-driven selection).}
Without structural knowledge, privately select $S$ using one of two methods:

\textbf{(3a) Greedy exponential selection.}
Define a score $u(D,j)$ measuring association between feature $j$ and $Y$. We use a clipped $\chi^2$ statistic: let $\chi^2_j$ be the chi-squared statistic for the $(X_j, Y)$ contingency table, and set $u(D,j) = \min(\chi^2_j / n, B)$ for a clipping bound $B$ (we use $B=1$). This score has sensitivity $O(1/n)$ under replace-one neighbors. Select $k$ features greedily: in round $t$, use the exponential mechanism with budget $\varepsilon_{\mathrm{sel}}/k$ to select feature $j_t$ from remaining candidates proportional to $\exp(\varepsilon_{\mathrm{sel}} \cdot u(D,j) / (2k \cdot \Delta_u))$, where $\Delta_u$ is the sensitivity.

\textbf{(3b) Stability-based selection.}
Fit a DP-regularized logistic regression (e.g., via objective perturbation or DP-SGD) and take $S$ as the support of the fitted coefficients. This approach is attractive for large $d$ and provides theoretical robustness guarantees \citep{thakurta2013stability}.

Both methods compose with subsequent measurement via standard DP composition. The selected $S$ is a predictive subset; we do not claim it recovers the Markov blanket or causal parents.

\subsection{Step 2: constructing the workload $\mathcal{W}$}
The workload $\mathcal{W} = \mathcal{W}_{\mathrm{task}} \cup \mathcal{W}_{\mathrm{bg}}$ comprises two parts:

\paragraph{Task workload $\mathcal{W}_{\mathrm{task}}$.}
Statistics preserving $P_{S,Y}$:
We include all 2-way marginals $(X_j,Y)$ for $j\in S$, optionally add a small set of 3-way marginals $(X_j,X_{j'},Y)$ for informative pairs selected via DP conditional mutual information, and (when feasible) include the full joint $(X_S,Y)$. Default: $m_3\le 20$, $\varepsilon_{\mathrm{mi}}=0.05\varepsilon$.

\paragraph{Background workload $\mathcal{W}_{\mathrm{bg}}$.}
Statistics for plausible overall structure:
We include all 1-way marginals and a spanning set of 2-way marginals $(X_j,X_{j'})$ selected by a DP MST-style procedure \citep{mckenna2021nist}.

\paragraph{Why this structure?}
Including 2-way and 3-way marginals over $S \cup \{Y\}$ ensures the PGM fitting stage has sufficient constraints to reconstruct $P_{S,Y}$ with low error (see Proposition~\ref{prop:chowliu} for a formal statement in the tree-structured case). The background workload maintains realistic marginal distributions for features outside $S$, supporting secondary analyses.

\subsection{Step 3: budget allocation}
Split the total budget $(\varepsilon, \delta)$ into four pools:
\begin{align*}
(\varepsilon,\delta) &= (\varepsilon_{\mathrm{sel}},0)+(\varepsilon_{\mathrm{mi}},0)\\
&\quad +(\varepsilon_{\mathrm{task}},\delta_{\mathrm{task}})+(\varepsilon_{\mathrm{bg}},\delta_{\mathrm{bg}}).
\end{align*}
Default split: $\varepsilon_{\mathrm{sel}} = 0.1\varepsilon$ (Regime 3 only), $\varepsilon_{\mathrm{mi}} = 0.05\varepsilon$ (for private MI estimation and 3-way marginal selection), $\varepsilon_{\mathrm{task}} = 0.65\varepsilon$, $\varepsilon_{\mathrm{bg}} = 0.2\varepsilon$, with $\delta$ split proportionally across measurement steps. In Regimes 1 and 2, the selection budget is reallocated to task measurement.

Within each pool, allocate to individual queries using Theorem~\ref{thm:alloc}. Define task weights $w_q$ for each $q \in \mathcal{W}_{\mathrm{task}}$:
We set $w_q=1$ for 2-way marginals $(X_j,Y)$, scale 3-way marginal weights by the DP-estimated conditional MI $\widetilde{I}(X_j;X_{j'}\mid Y)$ (normalized to average 1), and set $w_q=|S|$ for the full joint if included.
Background workload queries receive uniform weights $w_q = 1$.

Given weights $w_q$ and domain sizes $|\Omega_q|$, Theorem~\ref{thm:alloc} yields per-query budgets $\varepsilon_q^\star \propto \sqrt{w_q |\Omega_q|}$.

\subsection{Step 4: measurement and synthesis}
Measure each marginal $q \in \mathcal{W}$ by computing the empirical frequency vector (counts divided by $n$) and adding Gaussian noise calibrated to the allocated budget $\varepsilon_q$. Under replace-one neighbors, a probability vector changes by $+1/n$ and $-1/n$ in two cells, so $\ell_2$ sensitivity is $\Delta_2 = \sqrt{2}/n$. For $(\varepsilon_q, \delta_q)$-DP via the Gaussian mechanism, we add $\mathcal{N}(0, \sigma_q^2 I)$ where $\sigma_q = \Delta_2 \sqrt{2\log(1.25/\delta_q)}/\varepsilon_q$.

Fit a joint distribution $\widehat{P}$ via Private-PGM \citep{mckenna2019pgm}: this solves a maximum-entropy problem over distributions whose marginals match the noisy measurements, using mirror descent on a graphical model parametrization. The graph structure is determined by the workload---nodes are variables, and cliques correspond to measured marginals.

Finally, sample $n_{\mathrm{syn}}$ i.i.d.\ records from $\widehat{P}$ to produce $\widetilde{D}$. Both fitting and sampling are post-processing of DP outputs and incur no additional privacy cost.

\paragraph{Computational complexity.}
Feature selection (Step 1) requires $O(kd)$ score evaluations. Workload construction (Step 2) involves $O(|S|^2)$ marginals. PGM fitting (Step 4) scales as $O(T \cdot |\mathcal{W}| \cdot \bar{|\Omega|})$ for $T$ mirror-descent iterations and average domain size $\bar{|\Omega|}$; this is typically $O(10^4)$ for moderate workloads. Sampling is $O(n_{\mathrm{syn}} \cdot d)$. Overall runtime is dominated by PGM fitting and is comparable to existing marginal-based synthesizers.
Pseudocode for PRISM is provided in Appendix~\ref{sec:supp-theory}.

% ============================================================
\section{Theory: From DP Measurements to Predictive Utility}
\label{sec:theory}

This section develops theoretical guarantees connecting our mechanism design to downstream predictive performance. All proofs are deferred to Appendix~\ref{sec:proofs}.

\paragraph{Scope.}
We provide: (i) end-to-end DP via composition, (ii) distribution-free bounds connecting TV error on $(X_S,Y)$ to prediction risk (including ERM excess risk), and (iii) closed-form optimal allocation. Concrete instantiations appear in Appendix~\ref{sec:supp-theory}. Our bounds can be loose for structured distributions and apply to predictors using only $X_S$; we do not analyze PGM fitting rates \citep{mckenna2019pgm}.

\subsection{End-to-end DP guarantee}
\begin{theorem}[DP of PRISM]
\label{thm:dp}
Assume Step 1 (subset selection) is $(\varepsilon_{\mathrm{sel}},0)$-DP, Step 2 (MI estimation for 3-way marginal selection) is $(\varepsilon_{\mathrm{mi}},0)$-DP, and Step 4 measures each query $q\in\mathcal{W}$ with a mechanism that is $(\varepsilon_q,\delta_q)$-DP. Then the overall mechanism in Algorithm~\ref{alg:prism} is
\[
\Big(\varepsilon_{\mathrm{sel}}+\varepsilon_{\mathrm{mi}}+\sum_{q\in\mathcal{W}}\varepsilon_q,\ \sum_{q\in\mathcal{W}}\delta_q\Big)\text{-DP}.
\]
\end{theorem}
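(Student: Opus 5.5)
The plan is to view PRISM as an adaptive composition of mechanisms, apply the basic adaptive composition theorem for $(\varepsilon,\delta)$-DP, and finish with post-processing. I would write the mechanism as a sequence $\mathsf{M}_1,\mathsf{M}_2,\mathsf{M}_3$. Here $\mathsf{M}_1$ is the subset selection of Step 1, which outputs $S$. In Regimes 1 and 2 it is a data-independent constant, hence $(0,0)$-DP, which is covered by $(\varepsilon_{\mathrm{sel}},0)$. Next, $\mathsf{M}_2$ is the private MI estimation, which takes $S$ as auxiliary input and outputs the selected 3-way pairs together with the weights $\widetilde{I}$. Finally, $\mathsf{M}_3$ is the family of per-query Gaussian measurements. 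Its workload $\mathcal{W}$ and per-query budgets $(\varepsilon_q,\delta_q)$ from Theorem~\ref{thm:alloc} are deterministic functions of the earlier outputs $(S,\widetilde{I})$ and of public quantities such as the domain sizes $|\Omega_q|$.

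The key step is to invoke adaptive sequential composition. If each $\mathsf{M}_i(\cdot\,; o_{<i})$ is $(\varepsilon_i,\delta_i)$-DP for every fixed value of the previous outputs $o_{<i}$, then the joint output is $(\sum_i\varepsilon_i,\sum_i\delta_i)$-DP. I would include a short proof sketch for the two-stage case and extend it by induction. For neighbors $D\sim D'$ and a measurable set $T$ of joint outputs, let $T_{o}$ denote the section of $T$ at first output $o$. Then
\[
\Pr[(o,o')\in T\mid D]=\int \Pr[\mathsf{M}_2(D;o)\in T_{o}]\,d\mu_D(o).
\]
I would bound the inner probability by $\min\{1,\,e^{\varepsilon_2}\Pr[\mathsf{M}_2(D';o)\in T_o]+\delta_2\}$, and then apply the DP of $\mathsf{M}_1$ to the resulting $[0,1]$-valued function of $o$. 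This uses the functional form of DP: $\E_{D}[f]\le e^{\varepsilon}\E_{D'}[f]+\delta$ for $f\in[0,1]$, obtained from the layer-cake formula. Together these give $e^{\varepsilon_1+\varepsilon_2}\Pr[\cdot\mid D']+\delta_1+\delta_2$.

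Within $\mathsf{M}_3$, each query is measured with independent Gaussian noise, so the same composition applied $|\mathcal{W}|$ times gives $(\sum_q\varepsilon_q,\sum_q\delta_q)$. This holds even though $|\mathcal{W}|$ and the $\varepsilon_q$ depend on earlier outputs. The reason is that the bound is required only conditionally on each fixed $(S,\widetilde{I})$. For the resulting bound to be a single constant, I would note that the allocation is constrained so that $\sum_q\varepsilon_q=\varepsilon_{\mathrm{task}}+\varepsilon_{\mathrm{bg}}$, and similarly for the $\delta_q$, for every realization of $(S,\widetilde{I})$. The stated total then holds uniformly.

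Finally, PGM fitting and sampling of $\widetilde{D}$ use only the released noisy measurements, $S$, and public metadata. By the post-processing property, applied with possibly randomized maps and their independent randomness, they preserve the guarantee. I expect the main obstacle to be rigor in the adaptive step rather than any calculation. This includes handling the data-dependent workload size and budgets as described above, and checking that the MI step, if it is itself a composition such as Laplace noise on several CMI estimates followed by a top-$m_3$ choice, is accounted for entirely within $\varepsilon_{\mathrm{mi}}$ as the hypothesis assumes.
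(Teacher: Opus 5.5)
Your route is the paper's route: sequential composition of subset selection, MI estimation and the per-query Gaussian measurements, followed by post-processing for PGM fitting and sampling. Your handling of adaptivity improves on the paper's one-line proof. In particular, you observe that the stated total is a single constant only because the pools force $\sum_q\varepsilon_q=\varepsilon_{\mathrm{task}}+\varepsilon_{\mathrm{bg}}$ (and likewise for $\delta$) for every realization of the data-dependent workload. The paper never addresses this.

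The two-stage composition sketch, however, does not give $\delta_1+\delta_2$ as written. Let $g(o)=\Pr[\mathsf{M}_2(D';o)\in T_o]$. Applying the DP of $\mathsf{M}_1$ to the $[0,1]$-valued function $\min\{1,e^{\varepsilon_2}g(o)+\delta_2\}$ yields $e^{\varepsilon_1}\bigl(e^{\varepsilon_2}\Pr[\cdot\mid D']+\delta_2\bigr)+\delta_1$, so you get $\delta_1+e^{\varepsilon_1}\delta_2$. In PRISM the $\delta$-carrying Gaussian measurements come after the pure-DP stages. Iterating your lemma would therefore multiply each $\delta_q$ by $e$ raised to the sum of all earlier $\varepsilon$'s, at least $e^{\varepsilon_{\mathrm{sel}}+\varepsilon_{\mathrm{mi}}}$, instead of producing $\sum_q\delta_q$. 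The fix is one line. First use $\min\{1,a+\delta_2\}\le\min\{1,a\}+\delta_2$ to pull $\delta_2$ outside the integral. Then apply $\mathsf{M}_1$'s functional guarantee only to $\min\{1,e^{\varepsilon_2}g(o)\}\in[0,1]$, which gives $e^{\varepsilon_1+\varepsilon_2}\Pr[\cdot\mid D']+\delta_1+\delta_2$. Alternatively, cite the standard adaptive composition theorem, as the paper does.

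Separately, your claim that $\mathcal{W}$ is a deterministic function of $(S,\widetilde{I})$ and public metadata does not hold for PRISM as described. Step 2 chooses the background spanning set of 2-way marginals by a DP MST-style procedure on $D$, which is an additional private stage. It is not covered by your $\mathsf{M}_1,\mathsf{M}_2,\mathsf{M}_3$ decomposition. To reach the stated total you must either treat that selection as part of the $(\varepsilon_{\mathrm{mi}},0)$-DP Step 2 or charge it explicitly. This is the same accounting check you already flagged for the MI step. The paper's own proof also omits this stage.
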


\begin{remark}
In practice, R\'enyi/zCDP accountants yield tighter bounds \citep{zhang2021privsyn,mckenna2022aim}; this changes only the allocation constraint, not the principle.
\end{remark}

\subsection{Graphical sufficiency for prediction via Markov blanket}
The following proposition formalizes the key graphical property motivating our workload design in Regime 2.

\begin{proposition}[Markov blanket sufficiency for prediction]
\label{prop:mb-sufficiency}
Let $(X,Y)$ be distributed according to a distribution $P$ that is Markov with respect to a DAG $G$. Then
\[
P(Y \mid X) = P(Y \mid X_{\MB(Y)}) \quad \text{(a.s.)}.
\]
Consequently, for any proper loss (e.g., log loss), the Bayes-optimal predictor can be expressed as a function of $X_{\MB(Y)}$ only.
\end{proposition}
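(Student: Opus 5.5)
The plan is to prove the conditional independence $Y \perp\!\!\!\perp X_{R} \mid X_{M}$, where $M=\MB(Y)$ and $R=[d]\setminus M$. Once this is established, the identity $P(Y\mid X)=P(Y\mid X_M)$ a.s.\ is immediate. I will work on the finite/discretized domain assumed in Section~\ref{sec:prelim}, so that all conditional probabilities are ratios of positive-mass events.

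The route is the recursive factorization implied by Assumption~\ref{def:markov}:
\[
p(x,y)=p(y\mid x_{\Pa(Y)})\prod_{j\in\mathrm{Ch}(Y)} p(x_j\mid x_{\Pa(j)})\prod_{j\notin \mathrm{Ch}(Y)\cup\{Y\}} p(x_j\mid x_{\Pa(j)}).
\]
This factorization is equivalent to the local Markov property stated in the assumption; that equivalence is the standard ordering argument along a topological order, which I will cite from \citet{koller2009pgm}. I then split the factors into two groups.

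The factors that involve $y$ are $p(y\mid x_{\Pa(Y)})$ and, for each child $j$, $p(x_j\mid x_{\Pa(j)})$. Every variable appearing in these factors other than $y$ is a parent, a child, or a co-parent of $Y$, so these factors depend on $x$ only through $x_M$. The remaining factors do not involve $y$ at all. Therefore
\[
p(y\mid x)=\frac{p(x,y)}{\sum_{y'}p(x,y')}=\frac{f(y,x_M)}{\sum_{y'} f(y',x_M)},
\]
because the $y$-free factors cancel between numerator and denominator. The right-hand side is a function of $(y,x_M)$ only.

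Next I show that this function equals $p(y\mid x_M)$. Write $p(y\mid x)=g(y,x_M)$. Averaging over $x_R$ given $x_M$, with $y$ held fixed, gives $p(y\mid x_M)=\E[g(y,X_M)\mid X_M=x_M]=g(y,x_M)$. This gives the a.s.\ identity. Points where $p(x)=0$ form a null set, which accounts for the ``a.s.'' qualifier. In a general (non-discrete) setting I would replace sums by integrals against densities with respect to a product dominating measure; the argument is otherwise unchanged.

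For the consequence, the Bayes-optimal predictor under a loss $\ell$ minimizes the conditional risk $\E[\ell(a,Y)\mid X=x]$ pointwise. This conditional risk depends on $x$ only through $P(Y\mid X=x)=P(Y\mid X_M=x_M)$, so a minimizer can be chosen as a function of $x_M$. For a proper loss such as log loss, the minimizer is the conditional distribution itself, which is exactly $P(Y\mid X_M)$.

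The main obstacle is bookkeeping rather than depth. I need to check carefully that every factor containing $y$ involves only blanket variables, in particular that the children's factors bring in co-parents but nothing else. I also need to make sure the argument relies only on the Markov property, not faithfulness: the identity holds without faithfulness, and faithfulness would matter only for minimality of $\MB(Y)$.
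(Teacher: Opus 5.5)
Your proof is correct, but it takes a different route from the paper's. The paper argues in two lines. $\MB(Y)$ is, by its definition in the preliminaries, a set that d-separates $Y$ from all other nodes, so the global Markov property gives $Y \perp\!\!\!\perp X_{\setminus \MB(Y)} \mid X_{\MB(Y)}$. It then simply asserts that this is equivalent to $P(Y\mid X)=P(Y\mid X_{\MB(Y)})$, and that a proper-loss Bayes rule is a functional of the conditional law. You instead use only the easy direction, local Markov $\Rightarrow$ recursive factorization. You note that the only factors containing $y$ are $p(y\mid x_{\Pa(Y)})$ and the children's CPDs, cancel everything else in $p(y\mid x)$, and then identify the resulting function with $p(y\mid x_M)$ by the tower property.

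\emph{What your route buys:} self-containment. It bypasses soundness of d-separation, which is a deeper theorem than the factorization. It actually proves the ``equivalently'' step the paper leaves implicit. It shows transparently why co-parents belong to the blanket (they enter only through children's CPDs), and that any superset of parents, children, and co-parents works. Your decision-theoretic step also covers arbitrary losses, not just proper ones.

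\emph{What the paper's route buys:} brevity, and a direct tie to the d-separation definition of $\MB(Y)$, so it applies verbatim to any d-separating set without identifying that set structurally.

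One small sharpening: it is not true that every CPD is evaluated at a positive-mass event. For $y'$ in the denominator, a child's CPD may be conditioned on $(x_{\Pa(j)\setminus\{Y\}},y')$ of probability zero even when $p(x)>0$. This is harmless, because the factorization holds pointwise for any $[0,1]$-valued version of the CPDs, and the cancellation only needs $p(x)>0$. Your observation that faithfulness is not needed is also correct.
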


\begin{remark}[Graphical vs.\ causal sufficiency]
\label{rem:graphical-vs-causal}
Proposition~\ref{prop:mb-sufficiency} is graphical, not causal: it guarantees prediction optimality in a fixed distribution but not shift robustness. For robustness, Regime~1 targets causal parents $\Pa(Y)$, whose mechanism tends to remain stable under spurious-correlation shifts (Section~\ref{sec:scm-results}). For DP allocation, the proposition implies that preserving $(X_{\MB(Y)},Y)$ suffices to preserve the Bayes rule---justifying preferential budget allocation.
\end{remark}

\subsection{Risk sensitivity to marginal closeness}
The next two lemmas establish that prediction risk depends only on the marginal $(X_S, Y)$ when the predictor uses only features in $S$.

\begin{lemma}[Risk depends only on $(X_S,Y)$ marginal]
\label{lem:depends-only-marg}
Fix a subset $S\subseteq[d]$ and a predictor $h(x)=h(x_S)$ that depends only on $x_S$. Then for any distribution $P$ over $(X,Y)$,
\[
\mathcal{R}_P(h) = \E_{(X_S,Y)\sim P_{S,Y}}[\ell(h(X_S),Y)].
\]
\end{lemma}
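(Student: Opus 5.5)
The plan is to prove the identity as a change-of-variables (pushforward) statement for expectations. Let $\pi:\Zcal\to\Xcal_S\times\Ycal$ be the coordinate projection $\pi(x,y)=(x_S,y)$. By definition $P_{S,Y}=P\circ\pi^{-1}$ is the pushforward of $P$ under $\pi$. The assumption $h(x)=h(x_S)$ means there is a function $g:\Xcal_S\to\Ycal$ with $h=g\circ\pi_X$. Hence the integrand factors through the projection:
\[
\ell(h(x),y)=f(\pi(x,y)),\qquad f(x_S,y):=\ell(g(x_S),y).
\]

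The key step is to invoke the change-of-variables formula $\E_{Z\sim P}[f(\pi(Z))]=\E_{W\sim P\circ\pi^{-1}}[f(W)]$. This holds for every bounded measurable $f$. To justify it, first check indicator functions $f=\mathbf{1}_A$, where it is exactly the definition of the pushforward. Extend to simple functions by linearity, then to bounded measurable $f$ by monotone convergence. Here $f$ takes values in $[0,1]$, so integrability is automatic. Applying the formula gives $\mathcal{R}_P(h)=\E_{(X_S,Y)\sim P_{S,Y}}[\ell(g(X_S),Y)]$, which is the claim once we write $h(X_S)$ for $g(X_S)$.

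In the finite/discretized setting the paper primarily works in, the argument is just a regrouping of a finite sum:
\[
\sum_{x,y}P(x,y)\,\ell(h(x_S),y)=\sum_{x_S,y}\ell(h(x_S),y)\sum_{x_{\setminus S}}P(x_S,x_{\setminus S},y),
\]
and the inner sum is $P_{S,Y}(x_S,y)$. I would present this computation as the main proof and mention the pushforward argument for general domains.

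The only point needing care is measurability: $g$ (and hence $f$) must be measurable for the expectation to be defined. I will read the hypothesis "$h$ depends only on $x_S$" as "$h=g\circ\pi_X$ for a measurable $g$," which is automatic on discrete domains. No other obstacle is expected.
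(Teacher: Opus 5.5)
Your proposal is correct and takes the same approach as the paper. The paper's proof simply observes that $\ell(h(X),Y)$ depends only on $(X_S,Y)$ and integrates out $X_{\setminus S}$. Your finite-sum regrouping is exactly that computation, and your pushforward argument (with the measurability caveat) is a rigorous general-domain version of the same step.
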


\begin{lemma}[Total variation controls risk difference]
\label{lem:tv-risk}
Let $\ell(\cdot,\cdot)\in[0,1]$ be bounded. For any two distributions $P,Q$ over $(X,Y)$, any subset $S$, and any predictor $h(x)=h(x_S)$,
\[
\big|\mathcal{R}_P(h) - \mathcal{R}_Q(h)\big|
\le \TV(P_{S,Y},Q_{S,Y}).
\]
\end{lemma}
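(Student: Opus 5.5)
The plan is to reduce to the marginal over $(X_S,Y)$ using Lemma~\ref{lem:depends-only-marg}, and then apply the variational characterization of total variation for bounded test functions. Since $h(x)=h(x_S)$, Lemma~\ref{lem:depends-only-marg} gives
\[
\mathcal{R}_P(h)=\E_{P_{S,Y}}[f],\qquad \mathcal{R}_Q(h)=\E_{Q_{S,Y}}[f],
\]
where $f(x_S,y):=\ell(h(x_S),y)$. This is a single measurable function on the space of $(X_S,Y)$, and it takes values in $[0,1]$. The problem therefore becomes bounding $|\E_{\mu}[f]-\E_{\nu}[f]|$ for two probability measures $\mu=P_{S,Y}$ and $\nu=Q_{S,Y}$ on a common space, with $0\le f\le 1$.

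For the key step, I use the convention $\TV(\mu,\nu)=\sup_A|\mu(A)-\nu(A)|$. The cleanest route goes through the layer-cake representation $f=\int_0^1 \mathbf{1}\{f>t\}\,dt$. Fubini then gives
\[
\E_\mu[f]-\E_\nu[f]=\int_0^1\big(\mu(f>t)-\nu(f>t)\big)\,dt .
\]
Each integrand is bounded in absolute value by $\TV(\mu,\nu)$, and the interval $[0,1]$ has length one, so the bound follows.

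In the finite/discretized setting the paper works in, I can alternatively write the difference as $\sum_z f(z)(\mu(z)-\nu(z))$. Since $\sum_z(\mu(z)-\nu(z))=0$, I subtract the constant $1/2$ from $f$ to get $|f-1/2|\le 1/2$. Hölder then yields the bound $\tfrac12\|\mu-\nu\|_1=\TV$.

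The main point needing care is the normalization convention of TV. With the $\sup_A$ (equivalently half-$\ell_1$) definition, the constant is exactly $1$. If the $\ell_1$ norm were meant without the factor $1/2$, the bound would still hold, only more loosely. I will state the convention explicitly. I will also note that measurability of $f$ follows from measurability of $h$ and $\ell$, so the layer-cake/Fubini step is justified in general and not only for finite domains.
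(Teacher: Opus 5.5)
Your proposal is correct and follows the paper's proof. Both reduce via Lemma~\ref{lem:depends-only-marg} to comparing expectations of the single $[0,1]$-valued function $f(x_S,y)=\ell(h(x_S),y)$ under $P_{S,Y}$ and $Q_{S,Y}$, then apply the variational characterization of total variation. The paper cites that fact, whereas you prove it (by layer-cake/Fubini, or by centering at $1/2$ and applying H\"older in the finite case) and fix the normalization, which matches the paper's convention $\TV(p,q)=\tfrac12\|p-q\|_1$.
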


\subsection{Learning from synthetic data: an excess risk bound}
Lemma~\ref{lem:tv-risk} is for a fixed $h$. We now connect it to learning on synthetic data.

Let $\Hcal_S := \{h: \Xcal \to \Ycal \mid h(x)=h(x_S)\}$ be the class restricted to features $S$.

\begin{theorem}[Excess risk of ERM on synthetic data]
\label{thm:erm}
Assume $\ell\in[0,1]$ and the downstream analyst trains by empirical risk minimization (ERM) over $\Hcal_S$ on a synthetic dataset $\widetilde{D}$ of size $n_{\mathrm{syn}}$ sampled i.i.d.\ from $\widetilde{P}$. Let $\widehat{h} \in \argmin_{h\in\Hcal_S}\widehat{\mathcal{R}}_{\widetilde{D}}(h)$. Then with probability at least $1-\beta$ over $\widetilde{D}$,
\[
\mathcal{R}_P(\widehat{h})
\le
\inf_{h\in\Hcal_S}\mathcal{R}_P(h)
\;+\; 2\Delta_S(P,\widetilde{P})
\;+\; 2\,\mathfrak{R}_{n_{\mathrm{syn}}}(\Hcal_S)
\;+\; 2\sqrt{\frac{\log(2/\beta)}{2n_{\mathrm{syn}}}},
\]
where $\mathfrak{R}_{n_{\mathrm{syn}}}(\Hcal_S)$ is the Rademacher complexity term controlling uniform convergence under $\widetilde{P}$.
\end{theorem}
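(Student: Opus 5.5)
The bound follows by moving between $P$ and $\widetilde{P}$ with Lemma~\ref{lem:tv-risk}, and handling learning on the synthetic side with a standard uniform-convergence argument under $\widetilde{P}$.

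Fix any comparator $h^\star\in\Hcal_S$; we take the infimum over $h^\star$ at the end. We telescope
\begin{align*}
\mathcal{R}_P(\widehat{h}) - \mathcal{R}_P(h^\star)
&= \big[\mathcal{R}_P(\widehat{h})-\mathcal{R}_{\widetilde{P}}(\widehat{h})\big]
+ \big[\mathcal{R}_{\widetilde{P}}(\widehat{h})-\mathcal{R}_{\widetilde{P}}(h^\star)\big]\\
&\quad + \big[\mathcal{R}_{\widetilde{P}}(h^\star)-\mathcal{R}_P(h^\star)\big].
\end{align*}
Both $\widehat{h}$ and $h^\star$ lie in $\Hcal_S$, so they depend only on $x_S$. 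Lemma~\ref{lem:tv-risk} therefore bounds the first and third brackets by $\Delta_S(P,\widetilde{P})$ each.

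One subtlety is that $\widehat{h}$ is data-dependent. This causes no difficulty, because Lemma~\ref{lem:tv-risk} holds simultaneously for every $h\in\Hcal_S$ and is deterministic. It also helps that $\widetilde{P}$ is treated as fixed, conditioning on the DP mechanism's output, and the probability is only over $\widetilde{D}\sim\widetilde{P}^{n_{\mathrm{syn}}}$.

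The middle bracket is the usual ERM excess risk under $\widetilde{P}$. Since $\widehat{\mathcal{R}}_{\widetilde{D}}(\widehat{h})\le \widehat{\mathcal{R}}_{\widetilde{D}}(h^\star)$, it is at most
\[
\big[\mathcal{R}_{\widetilde{P}}(\widehat{h})-\widehat{\mathcal{R}}_{\widetilde{D}}(\widehat{h})\big]
+ \big[\widehat{\mathcal{R}}_{\widetilde{D}}(h^\star)-\mathcal{R}_{\widetilde{P}}(h^\star)\big]
\le 2\sup_{h\in\Hcal_S}\big|\mathcal{R}_{\widetilde{P}}(h)-\widehat{\mathcal{R}}_{\widetilde{D}}(h)\big|.
\]
The standard Rademacher bound for $[0,1]$-valued losses then does the work. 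McDiarmid gives bounded differences $1/n_{\mathrm{syn}}$, and symmetrization bounds the expected supremum. Together they show that with probability at least $1-\beta$ the two-sided supremum is at most $\mathfrak{R}_{n_{\mathrm{syn}}}(\Hcal_S)+\sqrt{\log(2/\beta)/(2n_{\mathrm{syn}})}$.

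The main thing to get right is the constant convention. We interpret $\mathfrak{R}_{n_{\mathrm{syn}}}(\Hcal_S)$ as the statement says, namely as the Rademacher complexity term of the loss class $\ell\circ\Hcal_S$ controlling uniform convergence under $\widetilde{P}$. It already absorbs the factor 2 from symmetrization, so that the displayed factors $2\mathfrak{R}$ and $2\sqrt{\cdot}$ come out correctly. The $\log(2/\beta)$ accounts for the two-sided deviation.

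Summing the three pieces gives
\[
\mathcal{R}_P(\widehat{h})\le \mathcal{R}_P(h^\star)+2\Delta_S(P,\widetilde{P})+2\mathfrak{R}_{n_{\mathrm{syn}}}(\Hcal_S)+2\sqrt{\log(2/\beta)/(2n_{\mathrm{syn}})}.
\]
The high-probability event does not depend on $h^\star$, since it is a uniform bound over $\Hcal_S$. We can therefore take the infimum over $h^\star\in\Hcal_S$ on that same event, which yields the claim.
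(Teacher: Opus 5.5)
Your proof is correct and follows the paper's argument. Both use the same uniform-convergence bound under $\widetilde{P}$, with the same convention that $\mathfrak{R}_{n_{\mathrm{syn}}}(\Hcal_S)$ absorbs the symmetrization constant, together with two applications of Lemma~\ref{lem:tv-risk}. The only difference is cosmetic: you fix a comparator $h^\star$ and take the infimum at the end on the uniform event, whereas the paper first shows $\mathcal{R}_{\widetilde{P}}(\widehat{h})\le\inf_{h\in\Hcal_S}\mathcal{R}_{\widetilde{P}}(h)+2\gamma$ and then transfers the infimum from $\widetilde{P}$ to $P$ via total variation.
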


\begin{remark}
The bound separates generalization under $\widetilde{P}$ from distribution shift $\Delta_S(P,\widetilde{P})$; our mechanism targets the latter via budget allocation.
\end{remark}

\paragraph{Concrete instantiations (appendix).}
Appendix~\ref{sec:supp-theory} instantiates these bounds for Gaussian histogram measurement, relating privacy budget to total variation error on $(X_S,Y)$, and provides a tree-structured reconstruction bound (Chow--Liu style) connecting marginal measurement error to joint distribution error.

\subsection{Optimal budget allocation for a risk upper bound}
\label{sec:alloc}
Having established how marginal error affects risk, we now turn to the \emph{budget allocation} problem: how should privacy budget be divided across measurements to minimize the risk bound?

Suppose we measure $m$ query groups (e.g., marginals) indexed by $t=1,\dots,m$, where group $t$ has domain size $|\Omega_t|$ (number of bins/cells). Let the DP noise scale behave as $\sigma_t = c_t/\varepsilon_t$ for constants $c_t$ determined by sensitivity and $\delta_t$. Suppose our downstream risk bound implies
\[
\Delta_S(P,\widetilde{P}) \;\le\; \sum_{t=1}^m w_t \cdot |\Omega_t| \cdot \sigma_t
\;=\; \sum_{t=1}^m \frac{a_t}{\varepsilon_t},
\quad \text{where } a_t := w_t|\Omega_t|c_t.
\]
Here $w_t$ is a \emph{task weight} capturing the importance of query group $t$ for predicting $Y$ (e.g., $w_t$ larger for marginals involving $\MB(Y)$ or top-ranked predictive features).

Consider the constrained optimization
\[
\min_{\varepsilon_1,\dots,\varepsilon_m>0}\ \sum_{t=1}^m \frac{a_t}{\varepsilon_t}
\quad \text{s.t.}\quad \sum_{t=1}^m \varepsilon_t \le \varepsilon_{\mathrm{meas}}.
\]

\begin{theorem}[Closed-form optimal allocation]
\label{thm:alloc}
The optimal solution satisfies
\[
\varepsilon_t^\star = \varepsilon_{\mathrm{meas}}
\cdot \frac{\sqrt{a_t}}{\sum_{s=1}^m \sqrt{a_s}},
\qquad t=1,\dots,m,
\]
and the optimal value is
\[
\sum_{t=1}^m \frac{a_t}{\varepsilon_t^\star}
=
\frac{1}{\varepsilon_{\mathrm{meas}}}\Big(\sum_{t=1}^m \sqrt{a_t}\Big)^2.
\]
\end{theorem}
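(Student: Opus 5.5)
The plan is to prove the lower bound $\sum_t a_t/\varepsilon_t \ge (\sum_t\sqrt{a_t})^2/\varepsilon_{\mathrm{meas}}$ for every feasible allocation by Cauchy--Schwarz. I will then check that the proposed $\varepsilon_t^\star$ is feasible and attains this bound, which shows it is optimal. Throughout I assume $a_t>0$, which holds when $w_t,|\Omega_t|,c_t>0$. If some $a_t=0$, the corresponding term vanishes and that group should receive zero budget; I would mention this degenerate case in a remark.

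\textbf{Step 1 (lower bound).} Fix any $\varepsilon_1,\dots,\varepsilon_m>0$ with $\sum_t\varepsilon_t\le\varepsilon_{\mathrm{meas}}$. Write $\sqrt{a_t}=\sqrt{a_t/\varepsilon_t}\cdot\sqrt{\varepsilon_t}$ and apply Cauchy--Schwarz:
\[
\Big(\sum_{t=1}^m\sqrt{a_t}\Big)^2\le\Big(\sum_{t=1}^m\frac{a_t}{\varepsilon_t}\Big)\Big(\sum_{t=1}^m\varepsilon_t\Big)\le\varepsilon_{\mathrm{meas}}\sum_{t=1}^m\frac{a_t}{\varepsilon_t}.
\]
Rearranging gives the claimed lower bound on the objective.

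\textbf{Step 2 (attainment).} Substitute $\varepsilon_t^\star=\varepsilon_{\mathrm{meas}}\sqrt{a_t}/\sum_s\sqrt{a_s}$. These values are positive and sum exactly to $\varepsilon_{\mathrm{meas}}$, so the allocation is feasible. Each term equals $a_t/\varepsilon_t^\star=\sqrt{a_t}\,\sum_s\sqrt{a_s}/\varepsilon_{\mathrm{meas}}$, and summing over $t$ gives $(\sum_s\sqrt{a_s})^2/\varepsilon_{\mathrm{meas}}$. This matches the lower bound, so $\varepsilon^\star$ is a minimizer and the optimal value is as stated.

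\textbf{Step 3 (uniqueness).} Equality in the first inequality of Step 1 requires $\sqrt{a_t/\varepsilon_t}\propto\sqrt{\varepsilon_t}$, that is, $\varepsilon_t\propto\sqrt{a_t}$. Equality in the second requires $\sum_t\varepsilon_t=\varepsilon_{\mathrm{meas}}$. Together these force $\varepsilon=\varepsilon^\star$, so the minimizer is unique and the phrase ``the optimal solution'' in the statement is justified.

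A KKT/Lagrangian argument would give the same result, since the objective is strictly convex on the open orthant. However, that route needs separate justification that the budget constraint is active and that a minimizer exists, because the domain is open. The Cauchy--Schwarz route avoids both issues, which is why I prefer it. There is no real obstacle here; the only points needing care are the degenerate $a_t=0$ case and stating equality conditions precisely for uniqueness.
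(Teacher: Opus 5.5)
Your proof is correct, but it takes a different route from the paper's.

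\textbf{The paper's route.} The paper forms the Lagrangian $\sum_t a_t/\varepsilon_t+\lambda(\sum_t\varepsilon_t-\varepsilon_{\mathrm{meas}})$ and solves stationarity to get $\varepsilon_t=\sqrt{a_t/\lambda}$. It then fixes $\lambda$ by imposing $\sum_t\varepsilon_t=\varepsilon_{\mathrm{meas}}$. As written, this only exhibits a stationary point. It leaves implicit both that the budget constraint binds and that the stationary point is the global minimizer.

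\textbf{Closing those gaps is easy.} Stationarity gives $\lambda=a_t/\varepsilon_t^2>0$, so the constraint is active by complementary slackness. Convexity of $\sum_t a_t/\varepsilon_t$ on the open orthant makes any KKT point a global minimizer, so no prior existence argument is needed. The ``separate justification'' you attribute to the KKT route is therefore lighter than you suggest.

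\textbf{Your route.} Your Cauchy--Schwarz argument gives a certificate: a lower bound valid for every feasible allocation, attainment at $\varepsilon^\star$, and uniqueness from the equality case. It needs no calculus or convexity.

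\textbf{What each buys.} The Lagrangian route is more mechanical and shows how the $\sqrt{a_t}$ rule would be discovered. It also adapts directly to other accounting constraints, such as the zCDP version mentioned in the paper's remark. Your route requires choosing the right inequality in advance, though H\"older's inequality covers the analogous power-law objectives.

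\textbf{Degenerate case.} If some $a_t=0$, the infimum over the open orthant is not attained. So ``zero budget'' is a boundary limit, not a feasible optimal solution.
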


\begin{remark}
The constant $a_t$ grows with domain size $|\Omega_t|$ and task weight $w_t$; the allocation $\propto\sqrt{a_t}$ directs budget toward task-critical marginals.
\end{remark}

% ============================================================
\section{Experiments}
\label{sec:experiments}

We evaluate PRISM on (i) a semi-synthetic structural causal model (SCM) benchmark where the ground-truth parents $\Pa(Y)$ and Markov blanket $\MB(Y)$ are known and (ii) the Adult Income dataset. Our experiments address three questions: (1) When does causal targeting (Regime~1) improve robustness under shift? (2) When does targeting the full Markov blanket (Regime~2) help when predictive children are stable? (3) When can closed-form optimal allocation (Theorem~\ref{thm:alloc}) provide gains over uniform allocation?

% Generated by: python experiments/paper1/run_full_benchmark.py --benchmark scm_spurious,scm_marginal,allocation_win
%             python experiments/paper1/run_icml_benchmark.py --datasets adult

\paragraph{Experimental setup.}
On the SCM benchmark, we generate discrete features with causal parents $A,B$, spurious children $S_1,\dots,S_{10}$, and additional irrelevant noise variables. We evaluate two distribution shifts: a \emph{spurious-correlation shift} that breaks the children at test time while preserving $P(Y\mid A,B)$, and a \emph{marginal/covariate shift} that changes $P(A,B)$ while preserving $P(Y\mid A,B)$. On Adult, we discretize continuous features into 8 quantile bins. Full dataset/preprocessing and privacy-accounting details are in the Extended Experiments appendix (Appendix~\ref{sec:extended-experiments}), with the evaluation protocol in Appendix~\ref{sec:ext-eval}.

\paragraph{Methods and metric.}
On SCM we report PRISM-Causal (Regime~1; $\Pa(Y)$) and PRISM-Graphical (Regime~2; $\MB(Y)$), each with closed-form (Opt) and uniform (Unif) allocation. On Adult we report PRISM-Predictive (Regime~3; DP feature selection) and its uniform-allocation ablation. We compare against MST (Private-PGM), PrivBayes (DataSynthesizer), and an oracle correlation top-$k$ baseline. Utility is measured via train-on-synthetic, test-on-real ROC-AUC (TSTR) using logistic regression. All SCM and Adult results are averaged over 10 random seeds; error bars show 95\% confidence intervals.

\subsection{SCM Spurious Shift Benchmark}
\label{sec:scm-results}

% Generated by: python experiments/paper1/run_full_benchmark.py --benchmark scm_spurious

Figure~\ref{fig:scm-shift} reports TSTR performance when spurious child features break at test time; Appendix~\ref{sec:extended-experiments} includes the corresponding $\varepsilon=1.0$ table.

\begin{figure}[t]
\centering
\includegraphics[width=\linewidth]{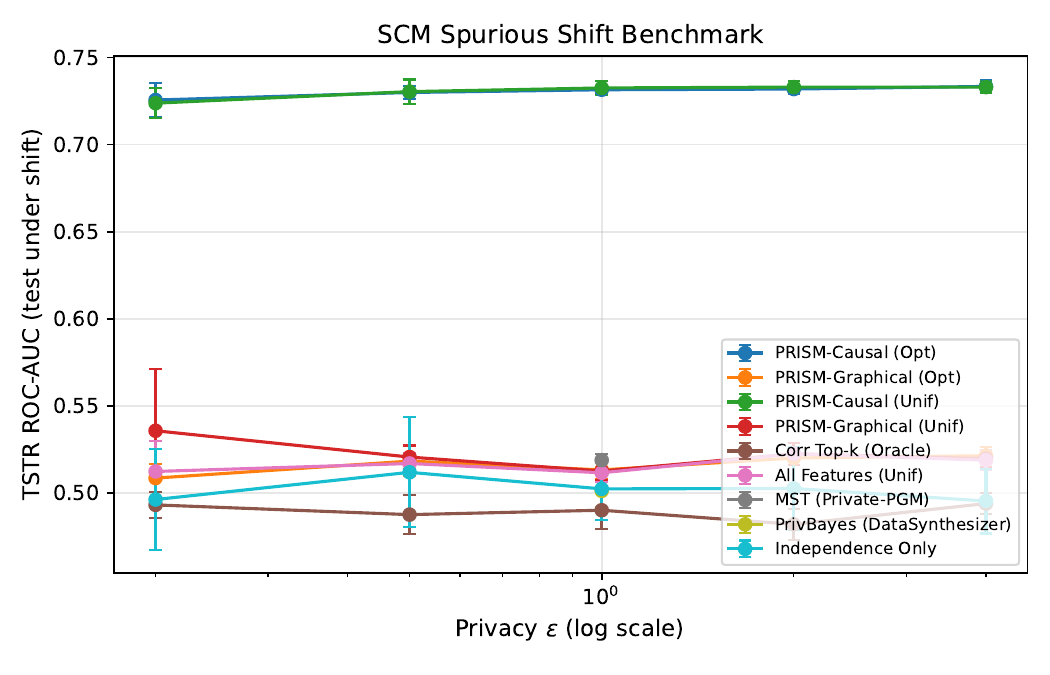}
\caption{SCM spurious shift: TSTR ROC-AUC vs.\ $\varepsilon$. PRISM-Causal (Regime~1) remains robust; correlation-based methods collapse because they rely on spurious children. Error bars: 95\% CI; MST/PrivBayes at $\varepsilon=1.0$ only.}
\label{fig:scm-shift}
\end{figure}

\paragraph{Causal guidance is critical under shift.}
Methods using the causal parents ($A, B$) achieve AUC $\approx 0.72$, closely matching the non-private upper bound. In contrast, correlation-based top-$k$ selection picks spurious correlates $S_1, \ldots, S_k$ (which have higher correlation with $Y$ in training data) and achieves AUC $\approx 0.49$---essentially random guessing. This demonstrates that \emph{causal structure, not correlation strength, determines robustness under distribution shift}.

\paragraph{Workload matters more than allocation in this setting.}
PRISM-Causal (Opt) and PRISM-Causal (Unif) are nearly indistinguishable: with only two parent features in the task workload, uniform and optimal allocation allocate similar noise levels. This indicates that in this setting, \emph{choosing the right features} is the dominant driver of robustness.

\paragraph{Blanket vs.\ parents.}
PRISM-Graphical (Regime~2; $\MB(Y)$) performs poorly here because it explicitly preserves the spurious children; when these relationships break at test time, the measured blanket becomes misleading. For prediction under spurious shifts, the minimal causal set $\Pa(Y)$ is preferable.

\subsection{SCM Marginal Shift Benchmark}
\label{sec:marginal-shift}

% Generated by: python experiments/paper1/run_full_benchmark.py --benchmark scm_marginal

We also evaluate a covariate/marginal shift where $P(A, B)$ changes at test time but $P(Y \mid A, B)$ is preserved (Figure~\ref{fig:scm-marginal}).

\begin{figure}[t]
\centering
\includegraphics[width=\linewidth]{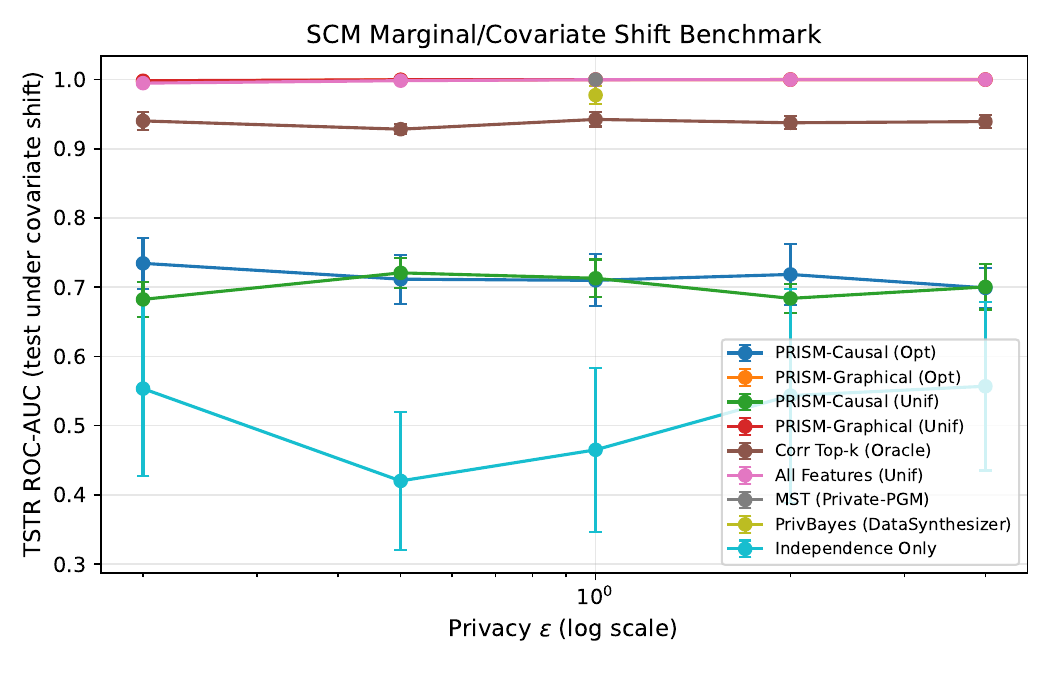}
\caption{SCM marginal shift: TSTR ROC-AUC vs.\ $\varepsilon$. Child features remain predictive, so blanket preservation helps. Error bars: 95\% CI; MST/PrivBayes at $\varepsilon=1.0$ only.}
\label{fig:scm-marginal}
\end{figure}

\paragraph{Blanket can dominate when children are stable.}
Because the child features $S_j$ remain strongly predictive of $Y$ in this shift, methods that preserve their relationship with $Y$---PRISM-Graphical and generic synthesizers such as MST/PrivBayes---achieve near-perfect AUC.

\paragraph{Parents provide robustness but may trade off accuracy.}
PRISM-Causal targets only $\Pa(Y)$ and thus ignores highly predictive children, yielding lower AUC in this setting. Together with the spurious-shift results, this illustrates why the appropriate regime depends on shift assumptions: causal parents improve robustness when spurious correlations may break, whereas blanket targeting can be advantageous when child relationships are stable.

\subsection{Real Data: Adult Income Benchmark}
\label{sec:adult-results}

% Generated by: python experiments/paper1/run_icml_benchmark.py --datasets adult

To complement the semi-synthetic SCM benchmark with a real dataset, we evaluate on Adult Income (11 features after preprocessing). Details of preprocessing and privacy accounting are in Appendix~\ref{sec:extended-experiments}.

\begin{figure}[t]
\centering
\includegraphics[width=\linewidth]{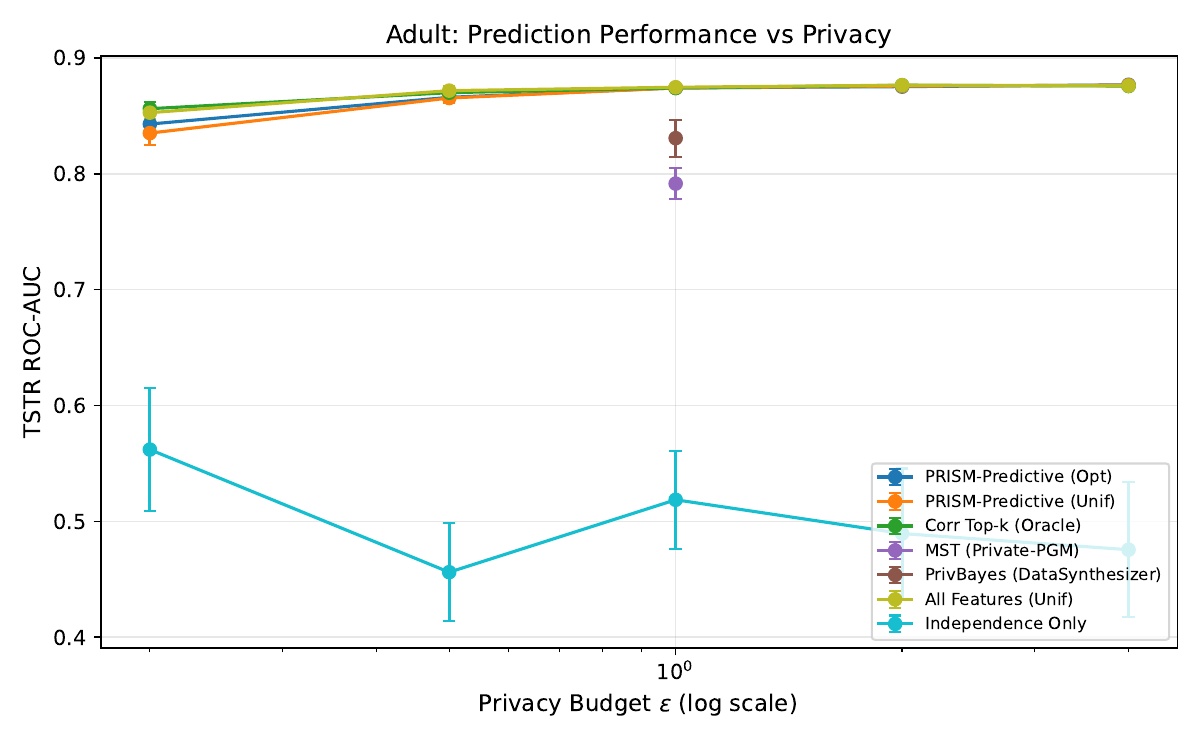}
\caption{Adult income: TSTR ROC-AUC vs.\ $\varepsilon$. Workload-based variants are similar on this large dataset; MST/PrivBayes lag. Error bars: 95\% CI; MST/PrivBayes at $\varepsilon=1.0$ only.}
\label{fig:adult}
\end{figure}

\paragraph{Task-aware synthesis matches strong internal baselines.}
PRISM-Predictive (Opt/Unif) is close to \emph{All Features (Unif)} on this low-dimensional dataset, indicating that feature selection and allocation have limited headroom when $d$ is small and $n$ is large. However, both PRISM variants are substantially better than MST and PrivBayes at the same privacy level.

\paragraph{When does optimal allocation help?}
In our SCM and Adult experiments, the gap between Opt and Unif allocation is modest: the SCM parents workload is tiny, and Adult is large enough that DP noise is relatively small. Appendix~\ref{sec:alloc-win} constructs a controlled setting with heterogeneous task weights where closed-form allocation yields a large gain, consistent with Theorem~\ref{thm:alloc}.

% ============================================================
\section{Limitations and Discussion}
\label{sec:limitations}

\paragraph{Causal graph availability.}
Our strongest causal results assume $\MB(Y)$ is known. In practice, domain knowledge may specify partial causal structure; or one may use DP causal discovery \citep{wang2020privpc}, which introduces additional privacy and estimation error. We treat ``DP-learned blanket'' as a secondary variant and leave its empirical evaluation to future work.

\paragraph{DP feature selection overhead.}
In Regime~3, PRISM privately estimates feature--target mutual information to select a top-$k$ subset and to derive allocation weights. This consumes privacy budget and introduces estimation noise, which can lead to suboptimal selections or misweighted allocations in small-data regimes. When public pilot data or reliable domain knowledge is available, one can reduce this cost by fixing a feature set or using public importance estimates.

\paragraph{High-cardinality and continuous features.}
Marginal-based methods can struggle with high-cardinality columns due to domain explosion; discretization introduces bias. Mixed-type approaches like RAP++ \citep{vietri2022rappp} suggest a promising direction. Our framework is compatible with mixed-type workloads by replacing histogram marginals with appropriate threshold/moment queries; the core allocation logic remains.

\paragraph{Beyond a single target $Y$.}
This paper focuses on a designated target $Y$. Extending to multiple targets (multi-task) is natural, but requires careful budget allocation and is closer to the setting studied by \citet{vietri2022rappp}. We treat multi-target as future work or as an ablation.

\paragraph{Not a replacement for task-agnostic release.}
If the use case truly requires broad exploratory analysis across all columns, a task-agnostic synthesizer may be preferable. Our method is designed for cases where the curator can legitimately prioritize a known primary prediction task.

\paragraph{Iterative and adaptive queries.}
Our task-derived workload assumes a fixed prediction target specified upfront. In practice, analysts may need to query data or refine models iteratively, where the sequence of queries can influence what analyses remain supportable under a fixed privacy budget. This challenge---adaptive data analysis under DP---is addressed by techniques such as the reusable holdout \citep{dwork2015reusable}, which provides a principled way to answer adaptively chosen queries. Integrating such adaptive mechanisms with task-aware synthesis is a promising direction for future work.

% ============================================================
\section{Conclusion}
\label{sec:conclusion}

We proposed PRISM, a prediction-centric framework for differentially private synthetic data release that operates across three regimes depending on available structural knowledge. The key idea is to treat privacy budget as a scarce resource and allocate it preferentially to preserve the dependence structure most relevant for predicting a designated target variable $Y$. In the \emph{causal regime}, when parents of $Y$ are known and distribution shift is expected, we target parents for robustness. In the \emph{graphical regime}, when a Bayesian network structure is available and the distribution is stable, the Markov blanket provides a sufficient feature set for optimal prediction. In the \emph{predictive regime}, when no structural knowledge exists, we select features via DP methods without claiming to recover any particular structure.

Experiments support the regime view: on the SCM benchmark with spurious shifts, PRISM-Causal (Regime~1) maintains high AUC while correlation-based selection and generic synthesizers fail; under marginal shift, PRISM-Graphical (Regime~2) benefits from stable predictive children and can achieve near-perfect AUC, illustrating a practical parents-vs.-blanket tradeoff. On Adult, PRISM-Predictive (Regime~3) is competitive with strong internal baselines and improves over MST/PrivBayes at the same privacy level. Finally, Appendix~\ref{sec:alloc-win} demonstrates that closed-form allocation can yield substantial gains when task weights are heterogeneous, especially under tight privacy.

For completeness, the appendix contains: Supplementary Theory (Appendix~\ref{sec:supp-theory}) with pseudocode and additional instantiations; Proofs (Appendix~\ref{sec:proofs}); and Extended Experiments (Appendix~\ref{sec:extended-experiments}) with detailed setup, full tables, and additional diagnostics.

% ============================================================
\section*{Broader Impact}

This work addresses privacy-preserving data sharing, a problem with direct societal relevance. By enabling organizations to release synthetic datasets that preserve predictive utility while satisfying differential privacy, our method can facilitate research collaboration in sensitive domains such as healthcare and finance where raw data sharing is infeasible.

Potential risks include over-reliance on synthetic data without understanding its limitations: models trained on synthetic data may underperform on subpopulations poorly represented in the original dataset, and the task-specific design means the released data may not support analyses beyond the designated prediction task. We encourage practitioners to clearly document the intended use case and privacy parameters when releasing synthetic data, and to validate downstream models on held-out real data when possible.

\bibliographystyle{plainnat}
\bibliography{ref1}

@inproceedings{dwork2006calibrating,
  title={Calibrating Noise to Sensitivity in Private Data Analysis},
  author={Dwork, Cynthia and McSherry, Frank and Nissim, Kobbi and Smith, Adam},
  booktitle={Theory of Cryptography Conference (TCC)},
  year={2006},
  pages={265--284},
  publisher={Springer}
}

@article{dwork2014algorithmic,
  title={The Algorithmic Foundations of Differential Privacy},
  author={Dwork, Cynthia and Roth, Aaron},
  journal={Foundations and Trends in Theoretical Computer Science},
  volume={9},
  number={3--4},
  pages={211--407},
  year={2014},
  publisher={Now Publishers}
}

@inproceedings{mcsherry2007mechanism,
  title={Mechanism Design via Differential Privacy},
  author={McSherry, Frank and Talwar, Kunal},
  booktitle={Foundations of Computer Science (FOCS)},
  year={2007},
  pages={94--103},
  publisher={IEEE}
}

@inproceedings{hardt2012mwem,
  title={A Simple and Practical Algorithm for Differentially Private Data Release},
  author={Hardt, Moritz and Ligett, Katrina and McSherry, Frank},
  booktitle={Advances in Neural Information Processing Systems (NeurIPS)},
  year={2012},
  pages={2339--2347}
}

@inproceedings{gaboardi2014dualquery,
  title={Practical Private Query Release for High Dimensional Data},
  author={Gaboardi, Marco and Lim, Hyun-Woo and Rogers, Ryan and Vadhan, Salil and others},
  booktitle={Advances in Neural Information Processing Systems (NeurIPS)},
  year={2014},
  note={arXiv:1402.1526}
}

@inproceedings{mckenna2019pgm,
  title={Graphical-model Based Estimation and Inference for Differential Privacy},
  author={McKenna, Ryan and Sheldon, Daniel and Miklau, Gerome},
  booktitle={International Conference on Machine Learning (ICML)},
  year={2019},
  pages={4435--4444},
  publisher={PMLR}
}

@article{zhang2017privbayes,
  title={PrivBayes: Private Data Release via Bayesian Networks},
  author={Zhang, Jun and Cormode, Graham and Procopiuc, Cecilia M and Srivastava, Divesh and Xiao, Xiaokui},
  journal={ACM Transactions on Database Systems (TODS)},
  volume={42},
  number={4},
  pages={25:1--25:41},
  year={2017},
  publisher={ACM},
  doi={10.1145/3134428}
}

@article{mckenna2021nist,
  title={Winning the {NIST} Contest: A Scalable and General Approach to Differentially Private Synthetic Data},
  author={McKenna, Ryan and Miklau, Gerome and Sheldon, Daniel},
  journal={arXiv preprint},
  year={2021},
  eprint={2108.04978},
  archivePrefix={arXiv},
  primaryClass={cs.LG}
}

@article{mckenna2022aim,
  title={{AIM}: An Adaptive and Iterative Mechanism for Differentially Private Synthetic Data},
  author={McKenna, Ryan and Mullins, Brett and Sheldon, Daniel and Miklau, Gerome},
  journal={Proceedings of the VLDB Endowment (PVLDB)},
  volume={15},
  number={11},
  pages={2599--2612},
  year={2022},
  doi={10.14778/3551793.3551817}
}

@inproceedings{zhang2021privsyn,
  title={PrivSyn: Differentially Private Data Synthesis},
  author={Zhang, Zhikun and Wang, Tianhao and Li, Ninghui and Honorio, Jean and Backes, Michael and He, Shibo and Chen, Jiming and Zhang, Yang},
  booktitle={USENIX Security Symposium},
  year={2021}
}

@article{jordon2019pategan,
  title={PATE-GAN: Generating Synthetic Data with Differential Privacy Guarantees},
  author={Jordon, James and Yoon, Jinsung and van der Schaar, Mihaela},
  journal={International Conference on Learning Representations (ICLR)},
  year={2019},
  note={arXiv:1806.01860}
}

@inproceedings{fang2022dpctgan,
  title={DP-CTGAN: Differentially Private Medical Data Generation Using {CTGAN}s},
  author={Fang, Mei Ling and Dhami, Devendra Singh and Kersting, Kristian},
  booktitle={Artificial Intelligence in Medicine},
  series={Lecture Notes in Computer Science},
  pages={178--188},
  publisher={Springer},
  year={2022},
  doi={10.1007/978-3-031-09342-5_17}
}

@article{su2015privpfc,
  title={Differentially Private Projected Histograms of Multi-Attribute Data for Classification},
  author={Su, Dong and Cao, Jianneng and Li, Ninghui},
  journal={arXiv preprint},
  year={2015},
  eprint={1504.05997},
  archivePrefix={arXiv},
  primaryClass={cs.DB}
}

@inproceedings{vietri2022rappp,
  title={Private Synthetic Data for Multitask Learning and Marginal Queries},
  author={Vietri, Giuseppe and Archambeau, Cedric and Aydore, Sergul and Brown, William and Kearns, Michael and Roth, Aaron and Siva, Ankit and Tang, Shuai and Wu, Zhiwei Steven},
  booktitle={Advances in Neural Information Processing Systems (NeurIPS)},
  year={2022}
}

@inproceedings{thakurta2013stability,
  title={Differentially Private Feature Selection via Stability Arguments, and the Robustness of the Lasso},
  author={Thakurta, Abhradeep and Smith, Adam},
  booktitle={Conference on Learning Theory (COLT)},
  year={2013},
  publisher={PMLR}
}

@inproceedings{wang2020privpc,
  title={Towards Practical Differentially Private Causal Graph Discovery},
  author={Wang, Lun and Pang, Qi and Song, Dawn},
  booktitle={Advances in Neural Information Processing Systems (NeurIPS)},
  year={2020},
  note={Also circulated as arXiv:2006.08598}
}

@book{koller2009pgm,
  title={Probabilistic Graphical Models: Principles and Techniques},
  author={Koller, Daphne and Friedman, Nir},
  publisher={MIT Press},
  year={2009}
}

@book{pearl2009causality,
  title={Causality: Models, Reasoning, and Inference},
  author={Pearl, Judea},
  edition={2},
  publisher={Cambridge University Press},
  year={2009}
}

@inproceedings{abadi2016dpsgd,
  title={Deep Learning with Differential Privacy},
  author={Abadi, Martin and Chu, Andy and Goodfellow, Ian and McMahan, H. Brendan and Mironov, Ilya and Talwar, Kunal and Zhang, Li},
  booktitle={ACM SIGSAC Conference on Computer and Communications Security (CCS)},
  year={2016},
  pages={308--318},
  publisher={ACM},
  doi={10.1145/2976749.2978318}
}

@article{dwork2015reusable,
  title={The Reusable Holdout: Preserving Validity in Adaptive Data Analysis},
  author={Dwork, Cynthia and Feldman, Vitaly and Hardt, Moritz and Pitassi, Toniann and Reingold, Omer and Roth, Aaron},
  journal={Science},
  volume={349},
  number={6248},
  pages={636--638},
  year={2015},
  publisher={American Association for the Advancement of Science},
  doi={10.1126/science.aaa9375}
}

% ============================================================
\appendix
\section{Supplementary Theory}
\label{sec:supp-theory}

\subsection{Algorithmic description}
\begin{algorithm}[t]
\caption{PRISM: Structure-Aware DP Synthesis}
\label{alg:prism}
\begin{algorithmic}[1]
\STATE \textbf{Input:} dataset $D$, target $Y$, privacy $(\varepsilon,\delta)$, regime $\in \{1,2,3\}$, optional structure ($\Pa(Y)$ or $G$), subset size $k$ (Regime 3), max marginal order $r$, $n_{\mathrm{syn}}$
\STATE Split privacy: $(\varepsilon,\delta) \leftarrow (\varepsilon_{\mathrm{sel}},0)+(\varepsilon_{\mathrm{mi}},0)+(\varepsilon_{\mathrm{task}},\delta_{\mathrm{task}})+(\varepsilon_{\mathrm{bg}},\delta_{\mathrm{bg}})$
\IF{Regime 1 (causal)}
\STATE $S \leftarrow \Pa(Y)$ \COMMENT{Parents for shift robustness}
\ELSIF{Regime 2 (graphical)}
\STATE $S \leftarrow \MB(Y)$ \COMMENT{Blanket for prediction optimality}
\ELSE
\STATE Privately select $S$ of size $k$ via exponential mechanism or stability-based DP feature selection
\ENDIF
\STATE Construct workload $\mathcal{W} \leftarrow \mathcal{W}_{\mathrm{task}}(S,Y,r)\;\cup\;\mathcal{W}_{\mathrm{bg}}(r)$
\STATE Allocate per-query budgets $\{(\varepsilon_q,\delta_q)\}_{q\in\mathcal{W}}$ using Section~\ref{sec:alloc}
\STATE Measure each $q(D)$ with Gaussian mechanism to obtain noisy answers $\{\widetilde{q}\}_{q\in\mathcal{W}}$
\STATE Fit distribution $\widehat{P}$ via PGM inference / maximum entropy consistent with $\{\widetilde{q}\}$ \citep{mckenna2019pgm}
\STATE Sample $\widetilde{D} \sim \widehat{P}^{\otimes n_{\mathrm{syn}}}$
\STATE \textbf{Return:} $\widetilde{D}$
\end{algorithmic}
\end{algorithm}

\subsection{Explicit bounds for histogram/marginal measurement}
To make the risk bounds concrete, consider discrete or binned data where the mechanism explicitly measures the histogram of $(X_S,Y)$.

Assume $(X_S,Y)$ takes values in a finite domain $\Omega$ of size $|\Omega| = |\Xcal_S|\cdot|\Ycal|$. Let the empirical histogram vector be
\[
\widehat{p} \in \Delta^{|\Omega|-1},\quad
\widehat{p}(\omega) := \frac{1}{n}\sum_{i=1}^n \mathbf{1}\{(x_{i,S},y_i)=\omega\}.
\]
A natural DP release is a noisy histogram $\widetilde{p} = \Pi_\Delta(\widehat{p} + \eta)$ where $\eta\sim \mathcal{N}(0,\sigma^2 I)$ and $\Pi_\Delta$ projects to the probability simplex (post-processing).

\begin{proposition}[High-probability $\ell_1$ error for Gaussian histogram]
\label{prop:l1hist}
Fix $\beta\in(0,1)$. With probability at least $1-\beta$,
\[
\|\widetilde{p} - \widehat{p}\|_1
\le
|\Omega| \cdot \sigma \sqrt{2\log(|\Omega|/\beta)}.
\]
Consequently,
\[
\TV(\widetilde{p},\widehat{p})
\le \frac12 |\Omega| \cdot \sigma \sqrt{2\log(|\Omega|/\beta)}.
\]
\end{proposition}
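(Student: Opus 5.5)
The plan has three steps: remove the projection, bound each coordinate of the Gaussian noise by a union bound, and sum the coordinates.

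\textbf{Removing the projection.} The release is $\widetilde{p} = \Pi_\Delta(\widehat{p}+\eta)$ with $\widehat{p}\in\Delta^{|\Omega|-1}$. Euclidean projection onto a closed convex set is nonexpansive in $\ell_2$, so $\|\widetilde{p}-\widehat{p}\|_2 = \|\Pi_\Delta(\widehat{p}+\eta)-\Pi_\Delta(\widehat{p})\|_2 \le \|\eta\|_2$. Cauchy--Schwarz then gives
\[
\|\widetilde{p}-\widehat{p}\|_1 \le \sqrt{|\Omega|}\,\|\eta\|_2 \le \sqrt{|\Omega|}\cdot\sqrt{|\Omega|}\,\|\eta\|_\infty = |\Omega|\,\|\eta\|_\infty .
\]
The stated bound is loose enough to absorb this $\ell_1$--$\ell_2$ conversion, so it is enough to control $\|\eta\|_\infty$. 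A direct coordinatewise bound $\|\Pi_\Delta(v)-\widehat p\|_1 \lesssim \|v-\widehat p\|_1$ is plausible but less standard, and this route avoids needing it.

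\textbf{Controlling $\|\eta\|_\infty$.} Each coordinate satisfies $\eta_\omega\sim\mathcal{N}(0,\sigma^2)$. The standard tail bound is $\Pr[|\eta_\omega|>t]\le 2e^{-t^2/(2\sigma^2)}$, and the Mills-ratio form is slightly sharper. A union bound over the $|\Omega|$ cells gives $\Pr[\|\eta\|_\infty>t]\le 2|\Omega| e^{-t^2/(2\sigma^2)}$.

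The main obstacle is matching the constant inside the logarithm, which is $\log(|\Omega|/\beta)$ rather than $\log(2|\Omega|/\beta)$. I would first set $t=\sigma\sqrt{2\log(|\Omega|/\beta)}$ and use the Mills-ratio bound $\Pr[|\eta_\omega|>t]\le \sqrt{2/\pi}\,(\sigma/t)\,e^{-t^2/(2\sigma^2)}$. This is at most $e^{-t^2/(2\sigma^2)}=\beta/|\Omega|$ whenever $t\ge\sigma\sqrt{2/\pi}$, which holds as soon as $\log(|\Omega|/\beta)\ge 1/\pi$. That condition is automatic when $|\Omega|\ge 2$ and $\beta<1/2$. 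Any remaining edge cases would either be handled separately or absorbed by stating the result with $\log(2|\Omega|/\beta)$. Under this choice of $t$, with probability at least $1-\beta$ we have $\|\eta\|_\infty\le\sigma\sqrt{2\log(|\Omega|/\beta)}$.

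\textbf{Combining.} On this event, the first step gives $\|\widetilde{p}-\widehat{p}\|_1\le|\Omega|\,\sigma\sqrt{2\log(|\Omega|/\beta)}$. The TV statement follows from $\TV(p,q)=\tfrac12\|p-q\|_1$ for distributions on a finite domain. This identity applies because both $\widetilde{p}$ and $\widehat{p}$ lie in the simplex, which is where the projection is needed.
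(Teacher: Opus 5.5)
Your proof is correct and follows the same route as the paper: a union bound on the Gaussian coordinates, $\ell_2$-nonexpansiveness of the simplex projection at $\widehat{p}$, then $\|v\|_1\le\sqrt{|\Omega|}\,\|v\|_2\le|\Omega|\,\|v\|_\infty$, and finally $\TV=\tfrac12\|\cdot\|_1$. Your Mills-ratio step is more careful than the paper's proof, which cites only ``a union bound on Gaussian tails'' and so glosses over the factor $2$ that the crude tail bound would put inside the logarithm. Your edge-case hedge is also unnecessary: $\log(|\Omega|/\beta)\ge 1/\pi$ holds for every $\beta\in(0,1)$ once $|\Omega|\ge 2$, and when $|\Omega|=1$ the simplex is a single point, so the error is zero.
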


\begin{remark}[From empirical to population]
Proposition~\ref{prop:l1hist} bounds deviation from the empirical histogram. To relate to $P_{S,Y}$, add the usual sampling error $\|\widehat{p}-P_{S,Y}\|_1$ (concentration in multinomials). In many DP synthesis evaluations, the synthetic data is generated from the post-processed estimate and then compared to held-out test data, so both sources matter.
\end{remark}

\subsection{From privacy budget to histogram noise}
For the Gaussian mechanism with $\ell_2$ sensitivity $\Delta_2$, a standard calibration is
$\sigma \ge \Delta_2 \sqrt{2\log(1.25/\delta)}/\varepsilon$ (one common form) \citep{dwork2014algorithmic}.
For replace-one neighbors, the probability vector has $\Delta_2=\sqrt{2}/n$ (one record changes at most two cells by $1/n$ each, giving $\ell_2$ sensitivity $\sqrt{(1/n)^2 + (1/n)^2} = \sqrt{2}/n$). Hence:
\[
\sigma \asymp \frac{1}{n}\cdot \frac{\sqrt{\log(1/\delta)}}{\varepsilon}.
\]
Plugging into Proposition~\ref{prop:l1hist} yields an explicit scaling:
\[
\TV(P_{S,Y},\widetilde{P}_{S,Y}) \;\lesssim\; \frac{|\Omega|}{n}\cdot \frac{\sqrt{\log(1/\delta)\log(|\Omega|/\beta)}}{\varepsilon}
\quad \text{(up to constants + sampling error)}.
\]
This makes the central tension clear: large domain size $|\Omega|$ (high-dimensional $S$ or high-cardinality bins) increases error. Causal guidance helps precisely by shrinking $S$.

\subsection{Special case instantiations}
The general framework specializes to several common modeling regimes.

\subsubsection{Special case 1: Bayes-optimal prediction under Markov blanket}
If $S=\MB(Y)$ is known and the synthetic distribution satisfies $\Delta_{\MB(Y)}(P,\widetilde{P})\le \alpha$, then by Theorem~\ref{thm:erm} and Proposition~\ref{prop:mb-sufficiency}, learning over $\Hcal_{\MB(Y)}$ yields excess risk within $2\alpha$ (plus generalization terms) of the best possible predictor using all features.

\subsubsection{Special case 2: logistic regression with bounded features}
For binary $Y\in\{0,1\}$ and logistic regression $h_\theta(x_S)=\sigma(\theta^\top \phi(x_S))$ with bounded feature map $\|\phi(x_S)\|_2\le B$, one can derive bounds in terms of deviations in (i) $\E[\phi(X_S)Y]$ and (ii) $\E[\phi(X_S)\phi(X_S)^\top]$. Our method recovers this by using a workload consisting of the needed moments and allocating privacy budget to them. This is the continuous/moment analogue of the histogram instantiation above.

\subsubsection{Special case 3: squared-loss regression}
For $Y\in[-1,1]$ and linear predictors $h_\theta(x_S)=\theta^\top \phi(x_S)$ with $\|\theta\|_2\le R$, Lipschitz arguments show risk differences bounded by moment errors. Again, task-aware workload design consists of measuring only the moments involving selected features $S$.

\begin{remark}[TV-based theory vs.\ moment instantiations]
The TV-based theory is a clean, distribution-free statement that directly matches marginal-based synthesizers and discrete tabular settings. Moment-based instantiations matter in mixed-type settings and connect to methods such as RAP++ \citep{vietri2022rappp}; the allocation logic carries over by treating each moment/query family as a group in the allocation optimization.
\end{remark}

\subsection{Bridging marginal measurements to joint distribution error}
The preceding bounds relate privacy budget to marginal TV error. A remaining question is: how does error in \emph{measured} marginals propagate to error in the \emph{reconstructed} joint distribution $\widehat{P}$? For general graphical models, this depends on the structure and is analyzed in \citet{mckenna2019pgm}. Here we provide an explicit bound for tree-structured distributions, which clarifies the key dependencies.

\begin{proposition}[Chow--Liu reconstruction error (proof sketch)]
\label{prop:chowliu}
Let $(X_S, Y)$ have a joint distribution $P$ that factors according to a tree $T$ over variables $S \cup \{Y\}$. Suppose all univariate marginals satisfy $P_i(x) \ge p_{\min} > 0$ for all values $x$ in their support (bounded away from zero). Suppose we measure all pairwise marginals $P_{ij}$ for edges $(i,j) \in T$ and all univariate marginals $P_i$, each with TV error at most $\alpha < p_{\min}/2$. Let $\widehat{P}$ be the maximum-entropy distribution matching the noisy marginals (which, for trees, is the unique distribution factorizing as $\widehat{P}(x_S,y) = \prod_{(i,j)\in T} \widehat{P}_{ij}(x_i,x_j) / \prod_{i \in S \cup \{Y\}} \widehat{P}_i(x_i)^{d_i - 1}$, where $d_i$ is the degree of node $i$).

Then
\[
\TV(P, \widehat{P}) \;\le\; |T| \cdot \frac{\alpha}{p_{\min}} + O(\alpha^2/p_{\min}^2),
\]
where $|T| = |S|$ is the number of edges in the tree.
\end{proposition}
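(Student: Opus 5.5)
The plan is to use the tree factorization to rewrite both $P$ and $\widehat{P}$ as products of conditional distributions along a rooted version of $T$, and then telescope. Root $T$ at $Y$ and orient each edge away from the root, writing $\pi(i)$ for the parent of node $i$. Since $P$ factors according to $T$, we have $P(x_S,y)=P_Y(y)\prod_{i\ne Y}P_{i\mid\pi(i)}(x_i\mid x_{\pi(i)})$, where $P_{i\mid\pi(i)}=P_{i\pi(i)}/P_{\pi(i)}$. The displayed max-entropy formula for $\widehat{P}$ rearranges in the same way into $\widehat{P}_Y\prod_i \widehat{P}_{i\mid\pi(i)}$ with $\widehat{P}_{i\mid\pi(i)}=\widehat{P}_{i\pi(i)}/\widehat{P}_{\pi(i)}$: each node other than the root appears once as a child, and this accounts for the exponent $d_i-1$. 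I would assume the noisy marginals have been projected so that they are pairwise consistent, meaning $\widehat{P}_{ij}$ has marginals $\widehat{P}_i$ and $\widehat{P}_j$. This is what makes the factorization a valid distribution, and the statement implicitly requires it.

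\textbf{Step 1 (hybrid/telescoping).} Order the non-root nodes topologically as $i_1,\dots,i_{|S|}$. Define hybrid distributions $Q_k$ that use $\widehat{P}$'s factors for the root and for the first $k$ non-root nodes, and $P$'s factors for the remaining ones. Then $\TV(P,\widehat{P})\le \TV(P_Y,\widehat{P}_Y)+\sum_k \TV(Q_{k-1},Q_k)$. Two hybrids differ in a single conditional factor. Because all the later factors are normalized conditionals, they integrate out, and we get
\[
\TV(Q_{k-1},Q_k)=\E_{x_{\pi}\sim Q_{k-1,\pi}}\,\TV\big(P_{i\mid\pi}(\cdot\mid x_\pi),\widehat{P}_{i\mid\pi}(\cdot\mid x_\pi)\big).
\]
Here $Q_{k-1,\pi}$ denotes the hybrid's marginal on the parent node. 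Since a parent precedes its child in the order, its marginal under the hybrid equals $\widehat{P}_\pi$ (or $\widehat{P}_Y$ when the parent is the root).

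\textbf{Step 2 (conditional perturbation).} For each edge, bound the term $\sum_{x_\pi}\widehat{P}_\pi(x_\pi)\,\TV(P_{i\mid\pi},\widehat{P}_{i\mid\pi})$. Writing $\widehat{P}_{i\pi}=P_{i\pi}+E$ and $\widehat{P}_\pi=P_\pi+e$, we have
\[
\widehat{P}_{i\mid\pi}-P_{i\mid\pi}=\frac{E-P_{i\mid\pi}\,e}{\widehat{P}_\pi}.
\]
Multiplying by $\widehat{P}_\pi$ and summing gives an $\ell_1$ bound of at most $\|E\|_1+\|e\|_1\le 4\alpha$. So each edge contributes $O(\alpha)$, even without a division by $p_{\min}$. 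A cruder bound that expands $1/\widehat{P}_\pi$ around $1/P_\pi$ instead gives a leading term $\alpha/p_{\min}$ plus a remainder of order $\alpha^2/p_{\min}^2$. That remainder stays controlled because $\alpha<p_{\min}/2$ keeps $\widehat{P}_\pi\ge p_{\min}/2$. Either way, each edge is bounded by $c\,\alpha/p_{\min}+O(\alpha^2/p_{\min}^2)$. Using $p_{\min}\le 1$, the root term $\alpha$ can also be absorbed into this form.

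\textbf{Step 3 (sum).} Summing over the $|T|=|S|$ edges gives the claim, up to absolute constants. To match the stated constant of $1$ I would track the constants carefully, since the TV conventions for the marginal errors matter here. The main obstacle is exactly this step: getting the per-edge constant, handling the root's unary error, and justifying the consistency of the noisy marginals. If the constant cannot be sharpened to $1$, I would present the bound with an explicit absolute constant. That is consistent with the ``proof sketch'' label on the proposition.
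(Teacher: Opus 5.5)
Your proposal follows the same route as the paper's sketch: factor along the tree into edge conditionals, perturb each ratio $P_{ij}/P_i$, and sum over the $|T|$ edges. Your hybrid argument, which averages each conditional's TV over the parent marginal $\widehat{P}_\pi$, is what makes the paper's appeal to ``sub-additivity of TV for product distributions'' rigorous, since the factors are conditionals rather than independent components, and the consistency assumption you add is indeed needed for $\widehat{P}$ to be a distribution. The constant you worry about closes with your own sharper Step~2 bound: a non-degenerate univariate marginal forces $p_{\min}\le 1/2$, so each edge's TV contribution $\tfrac12(\|E\|_1+\|e\|_1)\le 2\alpha$ is at most $\alpha/p_{\min}$, and starting the telescope with the joint $\widehat{P}_{Y i_1}$ (TV at most $\alpha$) instead of $\widehat{P}_Y$ absorbs the root's unary error, giving $\TV(P,\widehat{P})\le |S|\,\alpha/p_{\min}$ for $|S|\ge 1$ with no second-order term.
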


\begin{remark}[Implications for PRISM]
Proposition~\ref{prop:chowliu} shows that if the workload includes all pairwise marginals along a tree spanning $S \cup \{Y\}$, the joint reconstruction error scales linearly in $|S|$ times the per-marginal error. Combined with Proposition~\ref{prop:l1hist} and Theorem~\ref{thm:erm}, this yields an end-to-end bound from privacy budget to excess risk:
\[
\text{Excess risk} \;\lesssim\; |S| \cdot \frac{|\bar{\Omega}|}{n} \cdot \frac{\sqrt{\log(1/\delta)\log(|\bar{\Omega}|/\beta)}}{\varepsilon_{\text{task}}} + \text{(generalization terms)},
\]
where $|\bar{\Omega}|$ is the average pairwise domain size. For non-tree structures (cliques, higher-order marginals), the dependence on structure is more complex; see \citet{mckenna2019pgm} for junction-tree-based analysis.
\end{remark}

\section{Proofs}
\label{sec:proofs}

\subsection{Proof of Theorem~\ref{thm:dp} (DP of PRISM)}
\begin{proof}
Subset selection, MI estimation, and query measurements are applied sequentially to the same dataset. By the basic sequential composition theorem for DP \citep{dwork2014algorithmic}, privacy losses add: the combined mechanism is $(\sum \varepsilon_t, \sum \delta_t)$-DP. The PGM fitting and sampling are post-processing of the DP outputs and thus do not affect DP.
\end{proof}

\subsection{Proof of Proposition~\ref{prop:mb-sufficiency} (Markov blanket sufficiency)}
\begin{proof}
In a Bayesian network, the Markov blanket $\MB(Y)$ d-separates $Y$ from the remaining variables. By the global Markov property \citep{koller2009pgm}, $Y \perp\!\!\!\perp X_{\setminus \MB(Y)} \mid X_{\MB(Y)}$, which is equivalent to $P(Y\mid X)=P(Y\mid X_{\MB(Y)})$ almost surely. For proper losses, the Bayes predictor is a function of the conditional distribution of $Y$ given inputs, hence depends only on $X_{\MB(Y)}$.
\end{proof}

\subsection{Proof of Lemma~\ref{lem:depends-only-marg} (Risk depends on marginal)}
\begin{proof}
Since $h$ and $\ell(h(X),Y)$ depend only on $(X_S,Y)$, we can integrate out $X_{\setminus S}$; the expectation reduces to the marginal.
\end{proof}

\subsection{Proof of Lemma~\ref{lem:tv-risk} (TV controls risk difference)}
\begin{proof}
By Lemma~\ref{lem:depends-only-marg}, the risks are expectations of the same bounded function $f(x_S,y)=\ell(h(x_S),y)\in[0,1]$ under $P_{S,Y}$ and $Q_{S,Y}$. For bounded $f\in[0,1]$, the difference of expectations is upper bounded by total variation distance $\TV(P_{S,Y},Q_{S,Y})$ (equivalently by the variational definition of TV).
\end{proof}

\subsection{Proof of Theorem~\ref{thm:erm} (Excess risk of ERM on synthetic data)}
\begin{proof}
Standard uniform convergence (e.g., via Rademacher complexity) implies that with probability $\ge 1-\beta$ over the draw of $\widetilde{D}\sim \widetilde{P}^{\otimes n_{\mathrm{syn}}}$,
\[
\sup_{h\in\Hcal_S}\big|\mathcal{R}_{\widetilde{P}}(h)-\widehat{\mathcal{R}}_{\widetilde{D}}(h)\big|
\le \mathfrak{R}_{n_{\mathrm{syn}}}(\Hcal_S)+\sqrt{\frac{\log(2/\beta)}{2n_{\mathrm{syn}}}}.
\]
Denote $\gamma := \mathfrak{R}_{n_{\mathrm{syn}}}(\Hcal_S)+\sqrt{\log(2/\beta)/(2n_{\mathrm{syn}})}$. Since $\widehat{h}$ minimizes empirical risk on $\widetilde{D}$, we have for any $h^\star \in \Hcal_S$:
\begin{align*}
\mathcal{R}_{\widetilde{P}}(\widehat{h}) &\le \widehat{\mathcal{R}}_{\widetilde{D}}(\widehat{h}) + \gamma
\le \widehat{\mathcal{R}}_{\widetilde{D}}(h^\star) + \gamma
\le \mathcal{R}_{\widetilde{P}}(h^\star) + 2\gamma.
\end{align*}
Taking infimum over $h^\star$: $\mathcal{R}_{\widetilde{P}}(\widehat{h}) \le \inf_{h\in\Hcal_S}\mathcal{R}_{\widetilde{P}}(h) + 2\gamma$.

Now apply Lemma~\ref{lem:tv-risk} twice. For the left side: $\mathcal{R}_{P}(\widehat{h}) \le \mathcal{R}_{\widetilde{P}}(\widehat{h}) + \Delta_S(P,\widetilde{P})$. For the infimum: $\inf_{h}\mathcal{R}_{\widetilde{P}}(h) \le \inf_h \mathcal{R}_P(h) + \Delta_S(P,\widetilde{P})$ (since TV controls the difference for each $h$, it controls the infimum). Combining yields the stated bound with the factor of $2\Delta_S(P,\widetilde{P})$.
\end{proof}

\subsection{Proof of Proposition~\ref{prop:l1hist} ($\ell_1$ error for Gaussian histogram)}
\begin{proof}
Before projection, each coordinate noise $\eta_\omega$ satisfies
$|\eta_\omega| \le \sigma\sqrt{2\log(|\Omega|/\beta)}$
simultaneously for all $\omega$ with probability $\ge 1-\beta$ by a union bound on Gaussian tails.
Thus $\|\eta\|_1 \le |\Omega|\sigma\sqrt{2\log(|\Omega|/\beta)}$.

For the projection step: let $\widetilde{p}_{\mathrm{raw}} = \widehat{p} + \eta$ be the noisy histogram before projection. The simplex projection $\Pi_\Delta$ is the $\ell_2$-projection onto $\Delta^{|\Omega|-1}$. By the variational characterization of projections onto convex sets, for any $p \in \Delta^{|\Omega|-1}$:
\[
\|\Pi_\Delta(\widetilde{p}_{\mathrm{raw}}) - p\|_2 \le \|\widetilde{p}_{\mathrm{raw}} - p\|_2.
\]
Taking $p = \widehat{p}$ (which lies on the simplex): $\|\widetilde{p} - \widehat{p}\|_2 \le \|\eta\|_2$.

To convert to $\ell_1$: use $\|v\|_1 \le \sqrt{|\Omega|}\|v\|_2$ to get
\[
\|\widetilde{p} - \widehat{p}\|_1 \le \sqrt{|\Omega|}\|\eta\|_2 \le \sqrt{|\Omega|} \cdot \sigma\sqrt{|\Omega| \cdot 2\log(|\Omega|/\beta)} = |\Omega|\sigma\sqrt{2\log(|\Omega|/\beta)}.
\]
The TV bound follows from $\TV(p,q)=\frac12\|p-q\|_1$.
\end{proof}

\subsection{Proof of Theorem~\ref{thm:alloc} (Closed-form optimal allocation)}
\begin{proof}
Use a Lagrangian $\mathcal{L}(\varepsilon,\lambda)=\sum_t a_t/\varepsilon_t + \lambda(\sum_t \varepsilon_t-\varepsilon_{\mathrm{meas}})$.
Stationarity gives $-a_t/\varepsilon_t^2 + \lambda=0$, hence $\varepsilon_t=\sqrt{a_t/\lambda}$.
Enforce $\sum_t \varepsilon_t=\varepsilon_{\mathrm{meas}}$ to get $\sqrt{1/\lambda}=\varepsilon_{\mathrm{meas}}/(\sum_s \sqrt{a_s})$.
Substitute back for $\varepsilon_t^\star$ and the objective value.
\end{proof}

\subsection{Proof of Proposition~\ref{prop:chowliu} (Chow--Liu reconstruction error)}
\begin{proof}
The Chow--Liu factorization expresses the joint as a product over edge conditionals. Each edge contributes a factor $P(X_j \mid X_i) = P_{ij}(X_i, X_j)/P_i(X_i)$. TV error in $P_{ij}$ and $P_i$ propagates to error in this conditional; for $P_i \ge p_{\min}$ and perturbation $\alpha < p_{\min}/2$, the ratio error is $O(\alpha/p_{\min})$ per edge. Summing over $|T|$ edges and using sub-additivity of TV for product distributions yields the stated bound. The $O(\alpha^2/p_{\min}^2)$ term captures cross-terms. A full proof with explicit constants follows the analysis in \citet{koller2009pgm}, Chapter 11.
\end{proof}

\section{Extended Experiments}
\label{sec:extended-experiments}

This appendix provides the full experimental details for the results in Section~\ref{sec:experiments}, along with additional tables and diagnostics.

\subsection{Datasets and preprocessing}

\paragraph{Semi-synthetic SCM benchmark.}
We generate a discrete SCM with causal parents $A,B\in\{0,1,2\}$ sampled uniformly and target
$Y \sim \mathrm{Bernoulli}(\sigma(0.9(A-1)+0.9(B-1)+\eta))$ with $\eta\sim \mathcal{N}(0,0.5)$.
We create $10$ spurious child features $S_j \in \{0,1\}$ as $S_j = Y \oplus \mathrm{Bernoulli}(p_{\mathrm{flip}})$ and add $10$ irrelevant noise features $N_j \in \{0,1,2,3\}$ sampled independently.
Thus $d=22$ total features, $|\Pa(Y)|=2$, and $|\MB(Y)|=12$ (parents plus children).
We use $n_{\mathrm{train}}=5000$ and $n_{\mathrm{test}}=5000$.

\paragraph{Shifts.}
For the \emph{spurious shift} benchmark, we set $p_{\mathrm{flip}}^{\mathrm{train}}=0.10$ and $p_{\mathrm{flip}}^{\mathrm{test}}=0.50$, preserving $P(Y\mid A,B)$ but breaking child correlations.
For the \emph{marginal shift} benchmark, we shift the marginal distribution $P(A,B)$ at test time (towards higher values) while keeping $P(Y\mid A,B)$ fixed; we keep the child flip rate fixed at $0.15$ so that children remain predictive.

\paragraph{Adult Income.}
We use the numeric-encoded Adult dataset distributed with Private-PGM (48,842 rows).
We drop \texttt{fnlwgt} and \texttt{native-country}, discretize continuous features into 8 quantile bins, and perform an 80/20 stratified split, yielding $n_{\mathrm{train}}=39{,}073$ and $n_{\mathrm{test}}=9{,}769$ with $d=11$ features.

\subsection{Methods and privacy accounting}

\paragraph{PRISM variants (shared synthesis backend).}
All PRISM variants use the same Naive Bayes synthesis backend: we privately measure $P(Y)$ and a set of pairwise marginals $(X_j,Y)$ for a selected feature set $S$, then sample from $P(Y)\prod_{j\in S}P(X_j\mid Y)$ and sample remaining features independently from their (noisy) 1-way marginals.
PRISM-Causal (Regime~1) uses $S=\Pa(Y)$; PRISM-Graphical (Regime~2) uses $S=\MB(Y)$; PRISM-Predictive (Regime~3) selects $S$ via DP mutual information (top-$k$, with $k=8$ on Adult).

\paragraph{Closed-form vs.\ uniform allocation.}
PRISM-(Opt) allocates the task measurement budget across $(X_j,Y)$ marginals according to Theorem~\ref{thm:alloc}; PRISM-(Unif) allocates uniformly across the same workload. Unless otherwise noted, we use equal task weights on SCM and DP-estimated feature--target MI as weights on Adult.

\paragraph{Baselines.}
We compare against MST (Private-PGM) and PrivBayes (DataSynthesizer) as external end-to-end DP synthesizers, and include a non-private correlation top-$k$ feature selection baseline as an oracle reference.
Due to runtime, MST/PrivBayes are reported at $\varepsilon=1.0$ in figures.

\paragraph{Privacy.}
All mechanisms satisfy $(\varepsilon,\delta)$-DP with $\delta = 1/n^2$ using the Gaussian mechanism and basic sequential composition across measurements.
When PRISM-Predictive performs DP feature selection, we allocate 10\% of the total privacy budget to selection and use the remaining 90\% for synthesis.

\subsection{Evaluation protocol and reporting}
\label{sec:ext-eval}

\paragraph{Train-on-synthetic, test-on-real (TSTR).}
For each method and privacy budget $\varepsilon$, we generate a synthetic training set $\widetilde{D}$ and fit a logistic regression classifier on $\widetilde{D}$ to predict $Y$. We then evaluate ROC-AUC on a held-out real test set from the same benchmark. This matches the intended use case (a downstream analyst trains on released synthetic data) and isolates predictive utility.

\paragraph{Privacy budgets and repetitions.}
For PRISM variants we sweep $\varepsilon \in \{0.2,0.5,1.0,2.0,4.0\}$ on both SCM benchmarks and Adult. We repeat each configuration for 10 random seeds and report mean with 95\% confidence intervals. For computationally heavier external baselines (MST and PrivBayes), we report $\varepsilon=1.0$ due to runtime; this still provides a meaningful point of comparison because it matches a commonly used privacy level in synthesis benchmarks.

\paragraph{Hyperparameters.}
On the SCM benchmarks, PRISM-Causal uses $S=\Pa(Y)$ and PRISM-Graphical uses $S=\MB(Y)$ (ground truth). On Adult, PRISM-Predictive uses DP MI feature selection with $k=8$. For all PRISM methods, we use $n_{\mathrm{syn}}=5000$ unless otherwise noted; we keep the same evaluation model and preprocessing across methods to ensure differences are driven by the synthesizers rather than the downstream learner.

\subsection{Additional tables and diagnostics}

\paragraph{How to read these tables.}
The main paper focuses on trends across privacy budgets (Figures~\ref{fig:scm-shift}, \ref{fig:scm-marginal}, \ref{fig:adult}). Here we provide representative $\varepsilon=1.0$ tables and additional diagnostics to make the wins and trade-offs explicit and to aid reproducibility.

\paragraph{SCM spurious shift table.}
Under spurious shift, the only stable signal is the causal mechanism $P(Y\mid A,B)$. PRISM-Causal preserves this mechanism and remains robust, while approaches that preserve the Markov blanket (including spurious children) or that perform correlation-based selection fail once the children break.
\begin{table}[t]
\centering
\small
\begin{tabular}{ll}
\toprule
Method & AUC (95\% CI) \\
\midrule
PRISM-Causal (Unif) & 0.733 $\pm$ 0.004 \\
PRISM-Causal (Opt) & 0.732 $\pm$ 0.002 \\
MST (Private-PGM) & 0.519 $\pm$ 0.004 \\
PRISM-Graphical (Opt) & 0.513 $\pm$ 0.005 \\
PRISM-Graphical (Unif) & 0.513 $\pm$ 0.005 \\
All Features (Unif) & 0.512 $\pm$ 0.006 \\
Independence Only & 0.502 $\pm$ 0.018 \\
PrivBayes (DataSynthesizer) & 0.501 $\pm$ 0.009 \\
Corr Top-k (Oracle) & 0.490 $\pm$ 0.011 \\
\bottomrule
\end{tabular}

\caption{SCM spurious shift at $\varepsilon=1.0$: TSTR ROC-AUC (mean with 95\% CI).}
\label{tab:scm-shift}
\end{table}
At $\varepsilon=1.0$, PRISM-Causal achieves AUC around $0.73$, while MST is near $0.52$, PrivBayes is near $0.50$, and correlation top-$k$ is near chance. This is the strongest setting for PRISM: it directly matches Regime~1's goal of shift robustness.

\paragraph{SCM marginal shift table.}
Under marginal shift, the spurious children remain predictive and thus blanket-targeting is appropriate for accuracy, while parent-only targeting trades accuracy for robustness. This controlled contrast is the motivation for reporting both PRISM-Causal and PRISM-Graphical in the main text.
\begin{table}[t]
\centering
\small
\begin{tabular}{ll}
\toprule
Method & AUC (95\% CI) \\
\midrule
MST (Private-PGM) & 1.000 $\pm$ 0.000 \\
PRISM-Graphical (Unif) & 1.000 $\pm$ 0.000 \\
PRISM-Graphical (Opt) & 1.000 $\pm$ 0.000 \\
All Features (Unif) & 1.000 $\pm$ 0.000 \\
PrivBayes (DataSynthesizer) & 0.977 $\pm$ 0.013 \\
Corr Top-k (Oracle) & 0.942 $\pm$ 0.010 \\
PRISM-Causal (Unif) & 0.713 $\pm$ 0.027 \\
PRISM-Causal (Opt) & 0.710 $\pm$ 0.037 \\
Independence Only & 0.465 $\pm$ 0.118 \\
\bottomrule
\end{tabular}

\caption{SCM marginal shift at $\varepsilon=1.0$: TSTR ROC-AUC (mean with 95\% CI).}
\label{tab:scm-marginal}
\end{table}
At $\varepsilon=1.0$, PRISM-Graphical reaches near-perfect AUC (matching MST and the all-features baseline), while PRISM-Causal is substantially lower. This is the expected trade-off: including stable children improves accuracy, but would be harmful in the spurious-shift benchmark.

\paragraph{Adult task performance table.}
Adult has moderate dimensionality and a large training set, so many workload-based methods are near the ``All Features'' internal baseline; the primary gap is between task-aware marginals and generic end-to-end baselines (MST/PrivBayes) at the same privacy budget.
\begin{table}[t]
\centering
\small
\begin{tabular}{ll}
\toprule
Method & AUC (95\% CI) \\
\midrule
All Features (Unif) & 0.875 $\pm$ 0.001 \\
PRISM-Predictive (Opt) & 0.874 $\pm$ 0.002 \\
PRISM-Predictive (Unif) & 0.874 $\pm$ 0.003 \\
Corr Top-k (Oracle) & 0.874 $\pm$ 0.002 \\
PrivBayes (DataSynthesizer) & 0.831 $\pm$ 0.016 \\
MST (Private-PGM) & 0.792 $\pm$ 0.014 \\
Independence Only & 0.519 $\pm$ 0.042 \\
\bottomrule
\end{tabular}

\caption{Adult Income at $\varepsilon=1.0$: TSTR ROC-AUC (mean with 95\% CI).}
\label{tab:adult}
\end{table}
At $\varepsilon=1.0$, PRISM-Predictive essentially matches the strongest internal baselines but improves substantially over MST and PrivBayes. This suggests that on large-$n$, moderate-$d$ tabular datasets, the main advantage is avoiding generic structure selection rather than aggressively tuning allocation.

\paragraph{Adult fidelity diagnostics.}
Figure~\ref{fig:adult-fidelity} reports 1-way marginal L1 error. While fidelity does not fully determine predictive utility, it provides a sanity check that methods with extremely poor TSTR performance (e.g., independence-only) also exhibit large marginal distortion.
\begin{figure}[t]
\centering
\includegraphics[width=\linewidth]{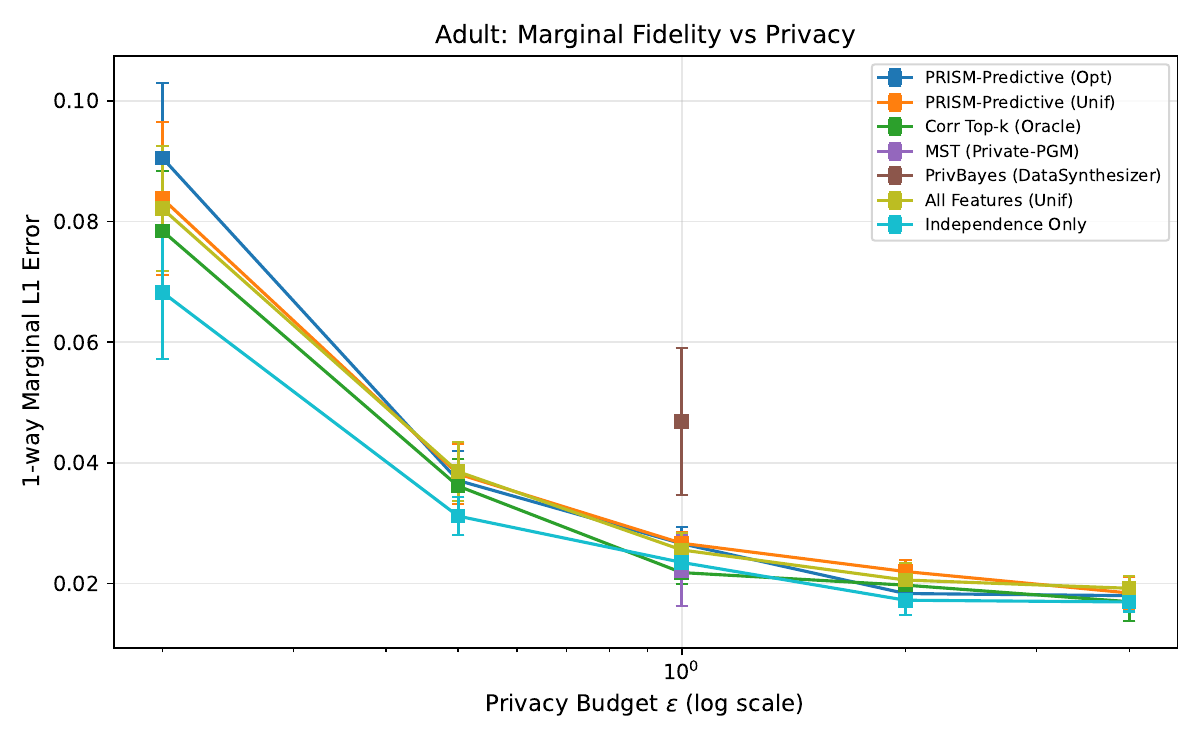}
\caption{Adult Income: 1-way marginal L1 error vs.\ $\varepsilon$ (mean with 95\% CI).}
\label{fig:adult-fidelity}
\end{figure}

\begin{table}[t]
\centering
\small
\begin{tabular}{llll}
\toprule
Method & AUC (95\% CI) & Accuracy & 1-way L1 \\
\midrule
All Features (Unif) & 0.875 $\pm$ 0.001 & 0.800 & 0.026 \\
PRISM-Predictive (Opt) & 0.874 $\pm$ 0.002 & 0.805 & 0.027 \\
PRISM-Predictive (Unif) & 0.874 $\pm$ 0.003 & 0.802 & 0.027 \\
Corr Top-k (Oracle) & 0.874 $\pm$ 0.002 & 0.798 & 0.022 \\
PrivBayes (DataSynthesizer) & 0.831 $\pm$ 0.016 & 0.793 & 0.047 \\
MST (Private-PGM) & 0.792 $\pm$ 0.014 & 0.760 & 0.022 \\
Independence Only & 0.519 $\pm$ 0.042 & 0.761 & 0.023 \\
\bottomrule
\end{tabular}

\caption{Adult Income at $\varepsilon=1.0$: extended metrics.}
\end{table}
The extended metrics help verify that PRISM's gains are not artifacts of the evaluation model: improvements track both predictive utility and (moderate) fidelity, without requiring non-private feature selection.

\subsection{Allocation Wins Benchmark}
\label{sec:alloc-win}

To isolate the effect of closed-form allocation, we construct a synthetic Naive Bayes dataset with heterogeneous feature importance.
We sample $Y\sim\mathrm{Bernoulli}(0.5)$, then sample $d=20$ binary features independently conditioned on $Y$.
Four \emph{strong} features use $P(X=1\mid Y=1)=0.9$ and $P(X=1\mid Y=0)=0.1$; the remaining features are \emph{weak} with $P(X=1\mid Y=1)=0.55$ and $P(X=1\mid Y=0)=0.45$.
We use $n_{\mathrm{train}}=400$ and $n_{\mathrm{test}}=2000$.
Closed-form allocation uses oracle task weights proportional to $(P(X=1\mid Y=1)-P(X=1\mid Y=0))^2$ to reflect heterogeneity; this benchmark is meant to validate the allocation principle, not feature selection.

\paragraph{Interpretation.}
This benchmark is designed so that accurate estimation of the four strong conditionals $P(X_j\mid Y)$ dominates prediction, while the remaining features contribute little. Under tight privacy (small $\varepsilon$) and small $n_{\mathrm{train}}$, uniform allocation wastes budget on weak features and injects unnecessary noise into the strong ones. Closed-form allocation concentrates budget on the strong features, yielding a large AUC gap that is consistent with the square-root weighting in Theorem~\ref{thm:alloc}.

\begin{figure}[t]
\centering
\includegraphics[width=\linewidth]{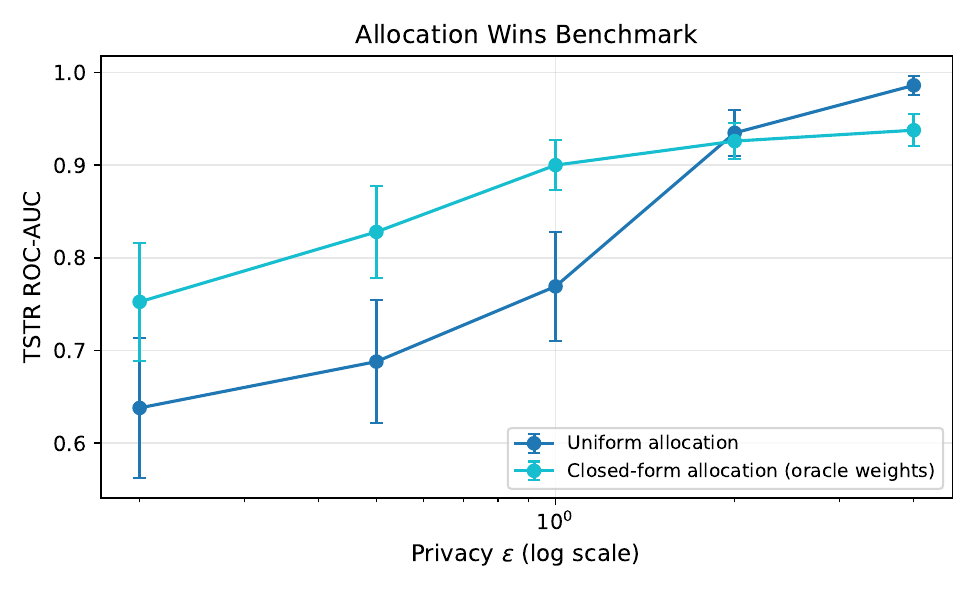}
\caption{Allocation wins: closed-form allocation vs.\ uniform on a heterogeneous-importance Naive Bayes dataset.}
\end{figure}

\begin{table}[t]
\centering
\small
\begin{tabular}{ll}
\toprule
Method & AUC (95\% CI) \\
\midrule
Closed-form allocation (oracle weights) & 0.900 $\pm$ 0.027 \\
Uniform allocation & 0.769 $\pm$ 0.059 \\
\bottomrule
\end{tabular}

\caption{Allocation wins at $\varepsilon=1.0$: TSTR ROC-AUC (mean with 95\% CI).}
\end{table}

\end{document}